\providecommand{\tabularnewline}{\\}
\def\RSthmtxt{theorem~}\newref{thm}{name = \RSthmtxt}}
\def\RSlemtxt{lemma~}\newref{lem}{name = \RSlemtxt}}
\numberwithin{equation}{section}
\numberwithin{figure}{section}
\theoremstyle{plain}
\newtheorem{thm}{\protect\theoremname}[section]
\theoremstyle{plain}
\newtheorem{question}[thm]{\protect\questionname}
\theoremstyle{plain}
\newtheorem{lem}[thm]{\protect\lemmaname}
\theoremstyle{definition}
\newtheorem{defn}[thm]{\protect\definitionname}
\theoremstyle{remark}
\newtheorem{rem}[thm]{\protect\remarkname}
\theoremstyle{definition}
\newtheorem{example}[thm]{\protect\examplename}
\theoremstyle{plain}
\newtheorem{cor}[thm]{\protect\corollaryname}
\providecommand{\MR}[1]{}
\setlist[enumerate]{itemsep=5pt,topsep=3pt}
\setlist[enumerate,1]{label=\textup{(}\roman*\textup{)},ref=\roman*}
\setlist[enumerate,2]{label=\textup{(}\alph*\textup{)},ref=\theenumi \alph*}
\providecommand{\corollaryname}{Corollary}
\providecommand{\definitionname}{Definition}
\providecommand{\examplename}{Example}
\providecommand{\lemmaname}{Lemma}
\providecommand{\questionname}{Question}
\providecommand{\remarkname}{Remark}
\providecommand{\theoremname}{Theorem}
\begin{document}
\title[optimal feature selection via p.d. kernels]{Conditional mean embeddings and optimal feature selection via positive
definite kernels}
\begin{abstract}
Motivated by applications, we consider here new operator theoretic
approaches to Conditional mean embeddings (CME). Our present results
combine a spectral analysis-based optimization scheme with the use
of kernels, stochastic processes, and constructive learning algorithms.
For initially given non-linear data, we consider optimization-based
feature selections. This entails the use of convex sets of positive
definite (p.d.) kernels in a construction of optimal feature selection
via regression algorithms from learning models. Thus, with initial
inputs of training data (for a suitable learning algorithm,) each
choice of p.d. kernel $K$ in turn yields a variety of Hilbert spaces
and realizations of features. A novel idea here is that we shall allow
an optimization over selected sets of kernels $K$ from a convex set
$C$ of positive definite kernels $K$. Hence our \textquotedblleft optimal\textquotedblright{}
choices of feature representations will depend on a secondary optimization
over p.d. kernels $K$ within a specified convex set $C$.
\end{abstract}

\author{Palle E.T. Jorgensen}
\address{(Palle E.T. Jorgensen) Department of Mathematics, The University of
Iowa, Iowa City, IA 52242-1419, U.S.A.}
\email{palle-jorgensen@uiowa.edu}
\author{Myung-Sin Song}
\address{(Myung-Sin Song) Department of Mathematics and Statistics, Southern
Illinois University Edwardsville, Edwardsville, IL 62026, USA}
\email{msong@siue.edu}
\author{James Tian}
\address{(James F. Tian) Mathematical Reviews, 416 4th Street Ann Arbor, MI
48103-4816, U.S.A.}
\email{jft@ams.org}
\keywords{Positive-definite kernels, reproducing kernel Hilbert space, stochastic
processes, frames, machine learning, embedding problems, optimization. }
\subjclass[2000]{Primary: 47N10, 47A52, 47B32. Secondary: 42A82, 42C15, 62H12, 62J07,
65J20, 68T07, 90C20.}

\maketitle
\tableofcontents{}

\section{Introduction}

Recently the mathematical tools for what is often called \emph{Conditional
mean embeddings} (CME) have played a role in multiple and new applications
\cite{MR4121886,MR4242933,10.5555/3042573.3042803,park2021measuretheoretic,pmlr-v124-ray-chowdhury20a,aaaiLeverSSS16,MR4187269}.
One reason for this is that they (the CMEs) stand at the crossroads
of stochastic processes and constructive learning algorithms. Our
present focus will be a new use of CMEs in an analysis of optimization-based
selections of positive definite (p.d.) kernels (and their associated
reproducing kernel Hilbert spaces RKHS), and their use in a construction
of optimal feature selection via regression algorithms for particular
learning models; see \cite{MR4288278,MR4282408,MR4255285}. Our present
use of \emph{positive definite kernels} $K$, defined on $X\times X$,
serves two purposes: First, every positive definite kernel $K$ is
a covariance kernel for a centered Gaussian process indexed by $X$,
so in particular there are associated \emph{probability spaces} realized
in a generalized path space, with sigma-algebra, and probability measures
$\mathbb{P}$. Secondly, every choice of a p.d. kernel $K$ yields
factorizations via Hilbert space, and so each choice of $K$ opens
up a variety choices of Hilbert spaces allowing in turn realization
of features as they are reflected in initial inputs of training data
(for a suitable learning algorithm.) In earlier approaches to such
generalized regression analyses, the p.d. kernel $K$ for the model
is given at the outset. By contrast, a novel idea in our present approach
to selection of features is that we shall allow an optimization over
suitably selected sets of kernels $K$ in a convex set $C$ of positive
definite kernels $K$. Hence our \textquotedblleft optimal\textquotedblright{}
choices of feature representations will depend on a secondary optimization
over kernels $K$ within a specified convex set $C$. Below, we begin
with a summary of the mathematical notions which will enter our analysis,
starting with the tools we need from Conditional mean embeddings (CME).

Our present approach to feature selection is motivated in part by
machine learning and data mining. Such uses are typically dictated
by \textquotedblleft big data,\textquotedblright{} and the need for
dimension reduction. This refers to the process of transforming the
data from the high-dimensional space into a space of fewer dimensions,
such as to avoid loss of \textquotedblleft essential\textquotedblright{}
information \cite{MR2810909,MR2558684,MR2228737}. The linear case
of data transformation encompasses principal component analysis (PCA),
while by contrast, the nonlinear theories make use of kernel theory,
our present focus. In our approach we aim for adaptive selections
of nonlinear mappings serving to maximize the variance in the data,
hence the design of optimal kernels for the task at hand. Such approaches
are especially useful for dealing with clustering, and with the need
for selection of partitions of the total data-set into some natural
connected components. For kernel learning, we refer to \cite{MR4349381,MR4341244,MR4329775,MR4318510}.

\section{\label{sec:OW}Overview}

In the discussion below, we shall make use of some facts from the
analysis and geometry which arise naturally from the use of positive
definite kernels $K$, selection of features via factorization, and
the use of reproducing kernel Hilbert spaces $\mathscr{H}_{K}$ for
regression and optimization. While this list of topics is well covered
in the literature, the references \cite{MR4295177,MR4218424,MR4106884,MR4020693,MR3964760}
are especially relevant for what we need.

In summary, the purpose of our paper is illustrated with the following
framework. Problem: Selection of optimal positive definite (p.d.)
kernels $K$ for use in feature analysis, adapted to large training
data: 
\[
\varphi\rightsquigarrow K\rightsquigarrow f\quad\quad\left.\begin{alignedat}{1}\varphi\:\text{training data}\\
f\:\text{feature selection}
\end{alignedat}
\right\} \;\text{depends on choices of p.d. \ensuremath{K}.}
\]

\begin{question}
~
\begin{enumerate}
\item What is the best $f$ in $\varphi\rightsquigarrow\left(\mu,K\right)\rightsquigarrow f$?
Here, $\mu$ is a fixed measure on $X$ and $K:X\times X\rightarrow\mathbb{C}$
is p.d. 
\item What is the best $K$ when $\varphi$ and $\mu$ are fixed? How to
adjust $K$ to optimize $f$? 
\end{enumerate}
\end{question}

The following diagram shows a workflow for learning training data
via choices of p.d. kernels, which returns an optimal feature. 

\[
\xymatrix{\text{\ensuremath{\begin{matrix}\text{learning}\\
 \text{training data}\\
 \text{\ensuremath{\varphi\in L^{2}\left(\mu\right)}} 
\end{matrix}}}\ar[rd]\ar[rr] &  & \text{\ensuremath{\begin{matrix}\text{feature \ensuremath{f^{\varphi,K}} in}\\
\text{\ensuremath{\mathscr{H}_{K}(\text{RKHS})}}
\end{matrix}}}\\
 & \text{choices of }K\ar[ru]
}
\]

The ``best'' kernel $K$ is the one that picks out the best features
for a given pair $\left(\varphi,\mu\right)$: 
\begin{lem}
\label{lem:B2}We have
\begin{align}
f^{\varphi,K} & =\mathop{\text{argmin}\left\{ \left\Vert \varphi-T_{\mu}f\right\Vert _{L^{2}\left(\mu\right)}^{2}+\alpha\left\Vert f\right\Vert _{\mathscr{H}_{K}}^{2}:f\in\mathscr{H}_{K}\right\} }\label{eq:a1}\\
 & =T_{\mu}^{*}\left(\alpha I+T_{\mu}T_{\mu}^{*}\right)^{-1}\varphi.\label{eq:a2}
\end{align}
\end{lem}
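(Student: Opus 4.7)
The plan is to treat \eqref{eq:a1} as a standard Tikhonov--regularized least--squares problem on the Hilbert space $\mathscr{H}_K$. For any $\alpha>0$, the functional
\[
F(f) := \left\Vert \varphi-T_{\mu}f\right\Vert _{L^{2}\left(\mu\right)}^{2}+\alpha\left\Vert f\right\Vert _{\mathscr{H}_{K}}^{2}
\]
is strictly convex, continuous, and coercive on $\mathscr{H}_K$ (the quadratic term $\alpha\|f\|^2$ dominates at infinity, forcing coercivity). Hence a unique minimizer $f^{\varphi,K}\in\mathscr{H}_K$ exists. This takes care of identifying the $\operatorname{argmin}$ in \eqref{eq:a1} as a well-defined element.

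Next I would derive the normal equation via the Fr\'{e}chet derivative. Expanding, $F(f+tg)-F(f)=2t\,\mathrm{Re}\langle -T_\mu^*(\varphi-T_\mu f)+\alpha f,\,g\rangle_{\mathscr{H}_K}+O(t^2)$, so the first-order condition $F'(f^{\varphi,K})=0$ yields
\[
(\alpha I + T_\mu^* T_\mu)\,f^{\varphi,K} \;=\; T_\mu^*\varphi.
\]
Since $T_\mu^*T_\mu\ge 0$ and $\alpha>0$, the operator $\alpha I + T_\mu^*T_\mu$ is strictly positive and hence boundedly invertible on $\mathscr{H}_K$, giving $f^{\varphi,K}=(\alpha I + T_\mu^* T_\mu)^{-1}T_\mu^*\varphi$.

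Finally, to match the form \eqref{eq:a2}, I would apply the push-through (resolvent intertwining) identity
\[
(\alpha I + T_\mu^* T_\mu)^{-1}T_\mu^* \;=\; T_\mu^*(\alpha I + T_\mu T_\mu^*)^{-1},
\]
valid on $L^2(\mu)$ with $\alpha>0$. This is the step most worth care: it follows from the algebraic identity $T_\mu^*(\alpha I + T_\mu T_\mu^*) = (\alpha I + T_\mu^* T_\mu)T_\mu^*$, together with the fact that both $\alpha I + T_\mu T_\mu^*$ on $L^2(\mu)$ and $\alpha I + T_\mu^* T_\mu$ on $\mathscr{H}_K$ are boundedly invertible, so one may apply their inverses on the two sides. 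The main (minor) obstacle is precisely ensuring this swap is justified at the operator level rather than only formally; once this is in hand, \eqref{eq:a2} is immediate from \eqref{eq:a1} via the normal equation.
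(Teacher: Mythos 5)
Your argument is correct, but it follows a different route from the one the paper relies on. You treat \eqref{eq:a1} as a classical Tikhonov problem: existence and uniqueness from strict convexity and coercivity, the normal equation $(\alpha I+T_{\mu}^{*}T_{\mu})f^{\varphi,K}=T_{\mu}^{*}\varphi$ from the Fr\'echet derivative, and then the push-through identity $(\alpha I+T_{\mu}^{*}T_{\mu})^{-1}T_{\mu}^{*}=T_{\mu}^{*}(\alpha I+T_{\mu}T_{\mu}^{*})^{-1}$ to obtain \eqref{eq:a2}. The paper instead cites the literature and points to \secref{OA}, where the result is obtained geometrically (\corref{d2}): with the weighted inner product \eqref{eq:d1} on $\mathscr{H}_{K}\times L^{2}(\mu)$, the functional is rewritten as $\left\Vert W f-(0,\varphi)\right\Vert^{2}$ with $Wf=(f,T_{\mu}f)$, so the minimizer is read off from the explicit block-matrix formula \eqref{eq:d2} for the projection onto $\mathrm{ran}(W)$, i.e.\ onto the graph of $T_{\mu}$. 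Your derivation is more elementary and self-contained; the paper's projection picture buys two things you do not get for free: (a) it extends cleanly to the merely closable (possibly unbounded) operators $T_{\mu,K}$ admitted by \defref{b3}, since the graph of the closure is a closed subspace and von Neumann's theory makes \eqref{eq:d2} legitimate, whereas your continuity/coercivity argument and the domain bookkeeping in the normal equation and in the push-through identity would need extra care in that generality; and (b) the interpretation of $W_{K}f^{\varphi,K}$ as a projection of $(0,\varphi)$ is exploited later, e.g.\ for the bound \eqref{eq:d12} and the error estimates in \corref{d5}. If you add a remark handling the unbounded case (restrict to $\mathrm{dom}(T_{\mu})$, use closedness for lower semicontinuity, and justify the resolvent identity via $T_{\mu}^{*}(\alpha I+T_{\mu}T_{\mu}^{*})^{-1}$ being everywhere defined and bounded), your proof is a complete substitute for the bounded and the general case alike.
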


\begin{proof}
This is a well-known result. See, e.g., \cite{MR3564937,MR3547633,10.5555/3042573.3042803,jorgensen2021positive},
and \secref{OA}. 
\end{proof}
The solution in (\ref{eq:a1})--(\ref{eq:a2}) depends directly on
$K$, and picking ``best'' features will mean $\left\Vert f^{\varphi,K}\right\Vert _{\mathscr{H}_{K}}^{2}$
at a maximum. 

One may fix $\varphi$, and optimize on choices of $\left(K,\mu\right)$,
that is, 
\begin{equation}
\max_{K,\mu}\left\Vert f^{\varphi,K}\right\Vert _{\mathscr{H}_{K}}^{2}.
\end{equation}

Alternatively, fix $\mu$, and optimize on a convex set of kernels
$K\in\mathscr{K}\left(\mu\right)$, see definition below. 
\begin{defn}
\label{def:b3}Given a set $X$ with measure $\mu$, a pair $\left(K,\mu\right)$
is said to be \emph{admissible} if 
\begin{equation}
\mathscr{H}_{K}\ni K\left(\cdot,y\right)\xrightarrow{\quad T_{\mu,K}\quad}K\left(\cdot,y\right)\in L^{2}\left(\mu\right),\label{eq:a4}
\end{equation}
extended by linearity, is well defined and closable.
\end{defn}

\begin{defn}
\label{def:K}Fix $\mu$, let 
\begin{align}
\mathscr{K}\left(\mu\right) & =\left\{ K:\left(K,\mu\right)\:\text{is admissible}\right\} \label{eq:b5}\\
\mathscr{K}_{b}\left(\mu\right) & =\left\{ K:\left(K,\mu\right)\:\text{is admissible and \ensuremath{T_{K,\mu}} is bounded}\right\} .
\end{align}
Further, for a fixed $K$, let 
\begin{equation}
\mathfrak{M}\left(K\right)=\left\{ \mu:\left(K,\mu\right)\:\text{is admissible}\right\} .
\end{equation}
\end{defn}

\begin{lem}
\label{lem:b5}Suppose $\left(K,\mu\right)$ is admissible. Then the
adjoint operator $T_{\mu,K}^{*}:L^{2}\left(\mu\right)\rightarrow\mathscr{H}_{K}$
is as follows: 
\begin{equation}
\left(T_{\mu,K}^{*}f\right)\left(\cdot\right)=\int K\left(\cdot,y\right)f\left(y\right)\mu\left(dy\right).\label{eq:a5}
\end{equation}
\end{lem}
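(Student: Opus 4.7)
The plan is to compute $(T_{\mu,K}^{*}f)(x)$ pointwise by pairing against the reproducing kernel $K(\cdot,x)\in\mathscr{H}_{K}$. Since the set $\{K(\cdot,x):x\in X\}$ is total in $\mathscr{H}_{K}$, the pointwise value at each $x$ completely determines $T_{\mu,K}^{*}f$, so verifying the formula at every $x$ yields (\ref{eq:a5}).

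First I would use admissibility of $(K,\mu)$ from Definition \ref{def:b3}, which guarantees that $T_{\mu,K}$ is densely defined and closable, hence its Hilbert-space adjoint $T_{\mu,K}^{*}:L^{2}(\mu)\to\mathscr{H}_{K}$ is a well-defined closed operator on a dense domain. For $f\in\mathrm{dom}(T_{\mu,K}^{*})$ and $x\in X$, I then chain together three elementary identities: (i) the reproducing property $(T_{\mu,K}^{*}f)(x)=\langle T_{\mu,K}^{*}f,\,K(\cdot,x)\rangle_{\mathscr{H}_{K}}$; (ii) the defining adjoint relation $\langle T_{\mu,K}^{*}f,\,K(\cdot,x)\rangle_{\mathscr{H}_{K}}=\langle f,\,T_{\mu,K}K(\cdot,x)\rangle_{L^{2}(\mu)}$; and (iii) the prescription $T_{\mu,K}K(\cdot,x)=K(\cdot,x)$ viewed as an element of $L^{2}(\mu)$, from (\ref{eq:a4}). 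Unfolding the $L^{2}(\mu)$-inner product and applying the Hermitian symmetry $\overline{K(y,x)}=K(x,y)$ of a positive definite kernel gives
\[
(T_{\mu,K}^{*}f)(x)=\int\overline{K(y,x)}\,f(y)\,\mu(dy)=\int K(x,y)\,f(y)\,\mu(dy),
\]
which is precisely (\ref{eq:a5}).

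The main technical point to be careful about is the domain of the adjoint. Because $T_{\mu,K}$ need not be bounded (boundedness is the extra hypothesis defining $\mathscr{K}_{b}(\mu)$), the integral on the right-hand side of (\ref{eq:a5}) is a priori only formal and need not yield an element of $\mathscr{H}_{K}$ for arbitrary $f\in L^{2}(\mu)$. The closability built into Definition \ref{def:b3} is exactly what allows one to define $T_{\mu,K}^{*}$ rigorously, and one must read (\ref{eq:a5}) as an identity on $\mathrm{dom}(T_{\mu,K}^{*})$, namely the set of $f$ for which the map $K(\cdot,x)\mapsto\int K(y,x)f(y)\,\mu(dy)$ extends continuously in the $\mathscr{H}_{K}$-norm. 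When $K\in\mathscr{K}_{b}(\mu)$ this concern disappears and the formula holds on all of $L^{2}(\mu)$. The computation above simultaneously characterizes $\mathrm{dom}(T_{\mu,K}^{*})$ and identifies $T_{\mu,K}^{*}$ on it by the integral kernel $K$.
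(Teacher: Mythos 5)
Your proposal is correct and follows essentially the same route as the paper: pair against the kernel functions $K(\cdot,x)$, use the defining adjoint relation together with $T_{\mu,K}K(\cdot,x)=K(\cdot,x)$ in $L^{2}(\mu)$, and conclude by the reproducing property. Your additional remarks on reading (\ref{eq:a5}) on $\mathrm{dom}(T_{\mu,K}^{*})$ when $T_{\mu,K}$ is unbounded are a sound clarification of a point the paper defers to the subsequent technical lemma, not a different argument.
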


\begin{proof}
To verify (\ref{eq:a5}), we must show that 
\begin{align}
\left\langle K\left(\cdot,x\right),T_{\mu,K}^{*}f\right\rangle _{\mathscr{H}_{K}} & =\left\langle T_{\mu,K}K\left(\cdot,x\right),f\right\rangle _{L^{2}\left(\mu\right)}=\left\langle K\left(\cdot,x\right),f\right\rangle _{L^{2}\left(\mu\right)}\label{eq:a6}
\end{align}
But 
\[
\text{LHS}_{\left(\ref{eq:a6}\right)}=\int K\left(x,y\right)f\left(y\right)\mu\left(dy\right)=\text{RHS}_{\left(\ref{eq:a6}\right)},
\]
by the definition of $T_{\mu,K}^{*}$ and the reproducing property
of $\mathscr{H}_{K}$.
\end{proof}
To see that $T_{\mu,K}^{*}$ is well defined, we shall need the following
technical lemma.
\begin{lem}
If $\mu$ is fixed, then $K\in\mathscr{K}\left(\mu\right)$ if and
only if 
\begin{equation}
F_{\varphi}:=\int K\left(\cdot,y\right)\varphi\left(y\right)\mu\left(dy\right)\in\mathscr{H}_{K},\;\forall\varphi\in L^{2}\left(\mu\right).\label{eq:a7}
\end{equation}
Further, (\ref{eq:a7}) is equivalent to the following: $\forall N\in\mathbb{N}$,
$\forall\left(\alpha_{i}\right)_{i=1}^{N}\subset\mathbb{C}$, $\forall\left(x_{i}\right)_{i=1}^{N}\subset X$,
$\exists C_{\varphi}<\infty$ with 
\begin{equation}
\left|\sum\alpha_{i}\int K\left(x_{i},y\right)\varphi\left(y\right)\mu\left(dy\right)\right|^{2}\leq C_{\varphi}\sum_{i}\sum_{j}\overline{\alpha_{i}}\alpha_{j}K\left(x_{i},x_{j}\right).\label{eq:a8}
\end{equation}
\end{lem}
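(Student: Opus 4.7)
The plan is to identify both conditions in the statement with the domain of the formal adjoint $T_{\mu,K}^{*}$, using two ingredients: the standard point-evaluation characterization of membership in an RKHS, and a closed-graph argument. I would begin by recording the RKHS criterion: a function $F:X\to\mathbb{C}$ belongs to $\mathscr{H}_{K}$ iff there is a finite constant $C$ with
$$\Bigl|\sum_{i}\alpha_{i}F(x_{i})\Bigr|^{2}\le C\sum_{i,j}\overline{\alpha_{i}}\alpha_{j}K(x_{i},x_{j})$$
for every finite tuple $(\alpha_{i},x_{i})_{i=1}^{N}$, with the smallest such $C$ equal to $\|F\|_{\mathscr{H}_{K}}^{2}$. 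This is immediate from the construction of $\mathscr{H}_{K}$ as the completion of $\mathrm{span}\{K(\cdot,x):x\in X\}$, together with the Riesz representation of the functional $K(\cdot,x)\mapsto F(x)$.

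Applying this criterion to $F=F_{\varphi}$, using $F_{\varphi}(x_{i})=\int K(x_{i},y)\varphi(y)\,\mu(dy)$, immediately delivers the equivalence of (\ref{eq:a7}) and (\ref{eq:a8}), which is the ``further'' clause of the lemma. Next I would identify the set $\{\varphi\in L^{2}(\mu):F_{\varphi}\in\mathscr{H}_{K}\}$ with $\mathrm{dom}(T_{\mu,K}^{*})$: by definition $\varphi\in\mathrm{dom}(T_{\mu,K}^{*})$ iff the linear form $f\mapsto\langle T_{\mu,K}f,\varphi\rangle_{L^{2}(\mu)}$ is $\mathscr{H}_{K}$-continuous on $\mathrm{span}\{K(\cdot,x)\}$, and testing on $f=\sum_{i}\alpha_{i}K(\cdot,x_{i})$ produces exactly inequality (\ref{eq:a8}); on this same domain the adjoint acts as the integral expression of Lemma \ref{lem:b5}, so $T_{\mu,K}^{*}\varphi=F_{\varphi}$.

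It then remains to match admissibility of $(K,\mu)$ with $\mathrm{dom}(T_{\mu,K}^{*})=L^{2}(\mu)$. One direction is straightforward: if $F_{\varphi}\in\mathscr{H}_{K}$ for every $\varphi\in L^{2}(\mu)$, then $T_{\mu,K}^{*}$ is everywhere defined, and since any adjoint is automatically closed, the closed graph theorem forces $T_{\mu,K}^{*}$ (hence also $T_{\mu,K}$) to be bounded, so $K\in\mathscr{K}(\mu)$. The main obstacle is the reverse direction: strictly speaking, closability of $T_{\mu,K}$ only guarantees that $T_{\mu,K}^{*}$ is densely defined, not everywhere defined, and a naive approximation $\varphi_{n}\to\varphi$ with $\varphi_{n}\in\mathrm{dom}(T_{\mu,K}^{*})$ yields $F_{\varphi_{n}}\to F_{\varphi}$ only pointwise, not in $\mathscr{H}_{K}$. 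I would resolve this by reading admissibility in Definition \ref{def:b3} as the slightly stronger statement that the adjoint of $T_{\mu,K}$ has full domain $L^{2}(\mu)$, which by the closed graph argument above then promotes to boundedness; under this interpretation the lemma reduces to a clean rephrasing of admissibility in two quantitatively checkable forms.
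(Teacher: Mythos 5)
Your treatment of the ``Further'' clause --- the equivalence of (\ref{eq:a7}) and (\ref{eq:a8}) --- is correct and is essentially the paper's own argument. The paper proves (\ref{eq:a7}) $\Rightarrow$ (\ref{eq:a8}) by pairing $F_{\varphi}$ against $\sum\alpha_{i}K_{x_{i}}$ and applying Cauchy--Schwarz with $C_{\varphi}=\left\Vert F_{\varphi}\right\Vert _{\mathscr{H}_{K}}^{2}$, and the converse by extending the finitely supported functional to a bounded functional on $\mathscr{H}_{K}$, invoking Riesz, and evaluating the representing vector at $K_{x}$ to identify it with $F_{\varphi}$; quoting the standard point-evaluation membership criterion for an RKHS (and sketching its Riesz-based proof) is the same proof in packaged form. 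Identifying $\left\{ \varphi:F_{\varphi}\in\mathscr{H}_{K}\right\} $ with $\mathrm{dom}\left(T_{\mu,K}^{*}\right)$ and noting $T_{\mu,K}^{*}\varphi=F_{\varphi}$ there is likewise consistent with Lemma \ref{lem:b5}.

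Where you go beyond the paper is the first clause, the identification with admissibility: the paper's proof does not address it at all, and you are right that Definition \ref{def:b3} taken literally (closability of $T_{\mu,K}$) only yields a densely defined closed adjoint, so ``$K\in\mathscr{K}\left(\mu\right)\Rightarrow$ (\ref{eq:a7})'' does not follow from the stated definition; your diagnosis of that obstacle is sound and more careful than the source. However, be aware of what your proposed repair entails. Your closed-graph step is correct: if (\ref{eq:a7}) holds for all $\varphi\in L^{2}\left(\mu\right)$, then $T_{\mu,K}^{*}$ is closed and everywhere defined, hence bounded, hence $T_{\mu,K}$ is bounded as well. So reading admissibility as ``$\mathrm{dom}\left(T_{\mu,K}^{*}\right)=L^{2}\left(\mu\right)$'' collapses $\mathscr{K}\left(\mu\right)$ into $\mathscr{K}_{b}\left(\mu\right)$, in tension with Definition \ref{def:K}, which treats unbounded admissible pairs as a genuinely larger class. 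That tension is already latent in the lemma as literally stated (and is ignored by the paper's proof), but you should present your resolution explicitly as a reinterpretation of Definition \ref{def:b3} --- or, closer to the paper's apparent intent (``to see that $T_{\mu,K}^{*}$ is well defined''), treat the substantive content as the (\ref{eq:a7}) $\Leftrightarrow$ (\ref{eq:a8}) equivalence --- rather than as a proof of the first ``if and only if'' from the paper's definitions. For the portion the paper actually proves, your argument matches it.
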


\begin{proof}
Assume $F_{\varphi}\in\mathscr{H}_{K}$, for all $\varphi\in L^{2}\left(\mu\right)$.
Then 
\begin{align*}
\text{LHS}_{\left(\ref{eq:a8}\right)} & =\left|\left\langle \sum\alpha_{i}K_{x_{i}},F_{\varphi}\right\rangle \right|^{2}\\
 & \leq\left\Vert F_{\varphi}\right\Vert _{\mathscr{H}_{K}}^{2}\left\Vert \sum\alpha_{i}K_{x_{i}}\right\Vert _{\mathscr{H}_{K}}^{2}=\text{RHS}_{\left(\ref{eq:a8}\right)}
\end{align*}
with $C_{\varphi}=\left\Vert F_{\varphi}\right\Vert _{\mathscr{H}_{K}}^{2}$. 

Conversely, if (\ref{eq:a8}) holds, then 
\[
\sum\alpha_{k}K_{x_{i}}\longmapsto\sum\overline{\alpha_{i}}\int K\left(y,x_{i}\right)\overline{\varphi\left(y\right)}\mu\left(dy\right)
\]
extends to a unique bounded linear functional $l_{\varphi}$ on $\mathscr{H}_{K}$,
and so by Riesz, 
\[
l_{\varphi}\left(f\right)=\left\langle \xi,f\right\rangle _{\mathscr{H}_{K}},\quad\forall f\in\mathscr{H}_{K}.
\]
for some $\xi\in\mathscr{H}_{K}$. Setting $f=K_{x}$, then
\[
\overline{\xi\left(x\right)}=l_{\varphi}\left(K_{x}\right)=\int K\left(y,x\right)\overline{\varphi\left(y\right)}\mu\left(dy\right)=\overline{F_{\varphi}\left(x\right)},\quad\forall x\in X.
\]
That is, $F_{\varphi}=\xi\in\mathscr{H}_{K}$. 
\end{proof}
\begin{lem}
If $K\left(\cdot,\cdot\right)$ is an integral operator acting on
$L^{2}\left(X,\mathscr{B}_{X},\mu\right)$, where $\mu$ is $\sigma$-finite,
then it is positive definite if and only if 
\begin{equation}
\int_{X}\int_{X}\overline{\varphi\left(x\right)}K\left(x,y\right)\varphi\left(y\right)\geq0,\quad\forall\varphi\in L^{2}\left(\mu\right).\label{eq:a9}
\end{equation}
\end{lem}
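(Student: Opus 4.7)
The plan is to unpack both sides as statements about the self-adjoint integral operator $T_K$ on $L^{2}(\mu)$ defined by $(T_K\varphi)(x)=\int K(x,y)\varphi(y)\,d\mu(y)$. By $\sigma$-finiteness of $\mu$ and the hypothesis that $T_K$ is a well-defined operator on $L^{2}(\mu)$, Fubini's theorem gives
\[
\int_X\!\!\int_X \overline{\varphi(x)}\,K(x,y)\,\varphi(y)\,d\mu(y)\,d\mu(x) = \langle \varphi,\,T_K\varphi\rangle_{L^{2}(\mu)},
\]
so the integral condition in the lemma is exactly the statement that $T_K$ is a positive operator on $L^{2}(\mu)$. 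The nontrivial content of the equivalence is therefore the passage between operator-positivity of $T_K$ and pointwise positive definiteness of the kernel $K$.

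For the forward direction ($K$ pointwise p.d.\ $\Rightarrow$ $T_K\ge 0$), I would invoke the Kolmogorov/Aronszajn factorization $K(x,y)=\langle K(\cdot,y),K(\cdot,x)\rangle_{\mathscr{H}_K}$ and first verify the identity on simple functions $\varphi=\sum_{i}c_i\chi_{E_i}$ with disjoint $E_i$ of finite measure:
\[
\int\!\!\int \overline{\varphi(x)}K(x,y)\varphi(y)\,d\mu(x)\,d\mu(y) = \Bigl\|\sum_{i} c_i\int_{E_i} K(\cdot,y)\,d\mu(y)\Bigr\|_{\mathscr{H}_K}^{2}\;\ge\;0,
\]
where the inner Bochner integrals are well-defined on sets of finite measure. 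Density of simple functions in $L^{2}(\mu)$, combined with continuity of the quadratic form (from boundedness of $T_K$), extends the inequality to arbitrary $\varphi\in L^{2}(\mu)$. Conversely, given $x_1,\ldots,x_n\in X$ and $\alpha_1,\ldots,\alpha_n\in\mathbb{C}$, I would approximate the associated $\delta$-combination by the $L^{2}$-bump functions $\varphi_\epsilon=\sum_i \frac{\alpha_i}{\mu(U_i^\epsilon)}\chi_{U_i^\epsilon}$ on disjoint shrinking neighborhoods $U_i^\epsilon$ of each $x_i$; a Lebesgue differentiation argument yields $\lim_{\epsilon\downarrow 0}\int\!\!\int \overline{\varphi_\epsilon}K\varphi_\epsilon\,d\mu\,d\mu = \sum_{i,j}\overline{\alpha_i}\alpha_j K(x_i,x_j)$, and the hypothesis forces the limit to be $\ge 0$.

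The main obstacle is the converse direction: it relies on enough regularity of the triple $(X,\mu,K)$ (for example, joint continuity of $K$, or the availability of a differentiation basis for $\mu$ at each $x_i$) for $L^{2}$ averages to recover pointwise values of $K$. In a purely measurable setting where $K$ is defined only $\mu\otimes\mu$-a.e., ``positive definite'' must be read in the corresponding a.e.\ sense, in which case the Bochner/Fubini computation of the first half alone already gives the equivalence without any further regularity assumption.
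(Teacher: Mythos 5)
The paper does not actually supply a proof here: the lemma is stated bare, followed only by a remark observing that for simple functions $\varphi=\sum_i\alpha_i\chi_{B_i}$ the condition \eqref{eq:a9} reduces (approximately) to $\sum_{i,j}\overline{\alpha_i}\alpha_j\,\mu(B_i)\mu(B_j)K(x_i,x_j)\ge 0$. Your proposal is the same simple-function strategy, but carried out in both directions and with the analytic issues made explicit, so it is more careful than the paper's own treatment. Your forward direction is fine: writing the quadratic form on a simple function as $\bigl\|\sum_i c_i\int_{E_i}K(\cdot,y)\,d\mu(y)\bigr\|_{\mathscr{H}_K}^2$ works, and the needed finiteness $\int_{E_i}\int_{E_i}K\,d\mu\,d\mu<\infty$ is available because $\chi_{E_i}\in L^2(\mu)$ and $T_K$ is assumed to act on $L^2(\mu)$ (this is the same point the paper's earlier lemma on $F_\varphi\in\mathscr{H}_K$ is addressing). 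The converse is where the real content lies, and you have correctly identified the genuine limitation: operator positivity of $T_K$ only sees $K$ as a $\mu\otimes\mu$-class, so pointwise positive definiteness cannot follow without extra regularity (joint continuity of $K$ together with the points $x_i$ lying in $\mathrm{supp}(\mu)$, or a differentiation basis for $\mu$ as in the Euclidean/Besicovitch setting), and otherwise the statement must be read in the a.e.\ sense. The paper's remark silently assumes exactly this kind of regularity when it replaces $\int_{B_i}\int_{B_j}K\,d\mu\,d\mu$ by $\mu(B_i)\mu(B_j)K(x_i,x_j)$. One small refinement to add to your bump-function argument: if some $x_i\notin\mathrm{supp}(\mu)$ the shrinking neighborhoods have measure zero and the approximation cannot even be set up, which is consistent with the fact that the quadratic-form condition carries no information about $K$ off the support; so the clean statement of the converse is either a.e.\ positive definiteness, or pointwise positive definiteness on $\mathrm{supp}(\mu)\times\mathrm{supp}(\mu)$ under continuity of $K$.
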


\begin{rem}
It is easy to check (\ref{eq:a9}) for $\varphi=\sum_{i}\alpha_{i}\chi_{B_{i}}$,
$\left\{ x_{i}\right\} _{i=1}^{N}\subset X$, $\left\{ B_{i}\right\} _{i=1}^{N}\subset\mathscr{B}_{X}$.
Then (\ref{eq:a9}) is equivalent to 
\[
\sum\sum\overline{\alpha_{i}}\alpha_{j}\mu\left(B_{i}\right)\mu\left(B_{j}\right)K\left(x_{i},x_{j}\right)\geq0.
\]
\end{rem}

\begin{lem}
If $\left\{ f_{i}\right\} $ is an ONB (or a frame) in $\mathscr{H}_{K}$,
then 
\[
K\left(x,y\right)=\sum_{i}f_{i}\left(x\right)\overline{f_{i}\left(y\right)},\quad\forall\left(x,y\right)\in X\times X;
\]
and 
\begin{align*}
\left(T_{\mu,K}T_{\mu,K}^{*}\varphi\right)\left(x\right) & =\sum_{i}\left(\int\overline{f_{i}\left(y\right)}\varphi\left(y\right)\mu\left(dy\right)\right)f_{i}\left(x\right)\\
 & =\sum_{i}\left\langle \varphi,f_{i}\right\rangle _{L^{2}\left(\mu\right)}f_{i}\left(x\right).
\end{align*}
Moreover, 
\begin{align*}
f^{\mu,K} & :=\mathop{\text{argmin}}\left\{ \left\Vert \varphi-T_{\mu,K}f\right\Vert _{L^{2}\left(\mu\right)}^{2}+\alpha\left\Vert f\right\Vert _{\mathscr{H}_{K}}^{2}\right\} \\
 & =\sum\left\langle f_{i},f^{\mu,K}\right\rangle _{\mathscr{H}_{K}}f_{i}.
\end{align*}
\end{lem}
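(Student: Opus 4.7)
My plan is to prove the three assertions in order, each as a direct application of Parseval (or Parseval-frame reconstruction) in $\mathscr{H}_{K}$, using only the reproducing property and the explicit formula \eqref{eq:a5} for $T_{\mu,K}^{*}$.

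For the kernel expansion, fix $y\in X$ and expand the section $K(\cdot,y)\in\mathscr{H}_{K}$ against the basis. By the reproducing property, $\langle K(\cdot,y),f_{i}\rangle_{\mathscr{H}_{K}}=\overline{f_{i}(y)}$, so Parseval gives $K(\cdot,y)=\sum_{i}\overline{f_{i}(y)}\,f_{i}$ with convergence in $\mathscr{H}_{K}$. Pointwise evaluation at $x$ is continuous on $\mathscr{H}_{K}$ via $|h(x)|\le K(x,x)^{1/2}\|h\|_{\mathscr{H}_{K}}$, which yields $K(x,y)=\sum_{i}f_{i}(x)\overline{f_{i}(y)}$.

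For the second identity, rather than substituting this expansion into \eqref{eq:a5} and justifying an interchange of sum and integral, I would expand $T_{\mu,K}^{*}\varphi$ directly in $\mathscr{H}_{K}$ (which contains it, by admissibility). The adjoint relation combined with the fact that $T_{\mu,K}$ acts on each $f_{i}$ as the canonical inclusion into $L^{2}(\mu)$ gives
\[
\bigl\langle T_{\mu,K}^{*}\varphi,\,f_{i}\bigr\rangle_{\mathscr{H}_{K}}\;=\;\bigl\langle \varphi,\,T_{\mu,K}f_{i}\bigr\rangle_{L^{2}(\mu)}\;=\;\bigl\langle \varphi,\,f_{i}\bigr\rangle_{L^{2}(\mu)},
\]
so Parseval yields $T_{\mu,K}^{*}\varphi=\sum_{i}\langle\varphi,f_{i}\rangle_{L^{2}(\mu)}\,f_{i}$. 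Applying $T_{\mu,K}$ once more produces the stated formula for $T_{\mu,K}T_{\mu,K}^{*}\varphi$. The final claim is then immediate: \lemref{B2} gives $f^{\mu,K}=T_{\mu,K}^{*}(\alpha I+T_{\mu,K}T_{\mu,K}^{*})^{-1}\varphi\in\mathscr{H}_{K}$, and Parseval in the ONB yields $f^{\mu,K}=\sum_{i}\langle f_{i},f^{\mu,K}\rangle_{\mathscr{H}_{K}}\,f_{i}$.

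The main obstacle I expect is confirming that the adjoint identity above is literally valid on the $f_{i}$; this requires $f_{i}\in\mathrm{dom}(T_{\mu,K})$ for every $i$. When $T_{\mu,K}$ is bounded (the setting $\mathscr{K}_{b}(\mu)$), this is automatic. In the merely admissible (closable) case one would first verify the formula on the dense core $\{K(\cdot,x)\}_{x\in X}$ using \eqref{eq:a5} verbatim, and then extend by closure—precisely the way admissibility is invoked elsewhere in the paper.
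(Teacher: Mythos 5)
The paper states this lemma without any proof (it is treated as standard background in \secref{OW}), so there is no argument of record for you to diverge from; judged on its own, your proof is correct and supplies exactly the missing reasoning. Your route for the middle identity --- computing the $\mathscr{H}_{K}$-coefficients via the adjoint relation $\left\langle T_{\mu,K}^{*}\varphi,f_{i}\right\rangle _{\mathscr{H}_{K}}=\left\langle \varphi,T_{\mu,K}f_{i}\right\rangle _{L^{2}\left(\mu\right)}=\left\langle \varphi,f_{i}\right\rangle _{L^{2}\left(\mu\right)}$ and then invoking Parseval --- is cleaner than the route the statement itself suggests (substituting $K\left(x,y\right)=\sum_{i}f_{i}\left(x\right)\overline{f_{i}\left(y\right)}$ into (\ref{eq:a5}) and interchanging sum and integral, which needs its own justification). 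Two caveats are worth recording. First, for a general frame the displayed identities are simply false: reconstruction then involves the dual frame, so the lemma really concerns ONBs and Parseval (tight, bound-one) frames, which is what your ``Parseval-frame reconstruction'' tacitly assumes; this is a restriction on the statement, not a flaw in your argument. Second, your domain concern is the right one, and in fact there are two such points in the unbounded case: besides $f_{i}\in\mathrm{dom}\left(T_{\mu,K}\right)$, one also needs $T_{\mu,K}^{*}\varphi\in\mathrm{dom}\bigl(\overline{T_{\mu,K}}\bigr)$ before $T_{\mu,K}T_{\mu,K}^{*}\varphi$ is even defined. Your fix (verify on the span of the kernel sections $K\left(\cdot,x\right)$, where (\ref{eq:a5}) applies verbatim, then pass to the closure) is workable and matches how admissibility is used elsewhere in the paper; alternatively one can avoid $f_{i}\in\mathrm{dom}\left(T_{\mu,K}\right)$ altogether by pairing $f_{i}$ directly with $F_{\varphi}=T_{\mu,K}^{*}\varphi$ from \lemref{b5}, since the Riesz-representation argument in the paper's technical lemma gives $\left\langle f_{i},F_{\varphi}\right\rangle _{\mathscr{H}_{K}}=\int\overline{f_{i}\left(y\right)}\varphi\left(y\right)\mu\left(dy\right)$ without any reference to the domain of $T_{\mu,K}$. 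The last assertion is, as you say, immediate from \lemref{B2} plus the ONB expansion, since the formula there exhibits $f^{\mu,K}$ as an element of $\mathscr{H}_{K}$.
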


Given all admissible pairs $\left(K,\mu\right)$, let $T_{\mu,K}$
and $T_{\mu,K}^{*}$ be as above. There are two selfadjoint operators
(possibly unbounded): 
\begin{align}
L^{2}\left(\mu\right) & \xrightarrow{\quad T_{\mu,K}T_{\mu,K}^{*}\quad}L^{2}\left(\mu\right)\\
\mathscr{H}_{K} & \xrightarrow{\quad T_{\mu,K}^{*}T_{\mu,K}\quad}\mathscr{H}_{K}
\end{align}
In particular, 
\begin{equation}
spec\left(T_{\mu,K}T_{\mu,K}^{*}\right)\cup\left\{ 0\right\} =spec\left(T_{\mu,K}^{*}T_{\mu,K}\right)\cup\left\{ 0\right\} ,
\end{equation}
which holds in general. 
\begin{lem}
\label{lem:a9}Let $\mu=\delta_{x_{0}}$, then 
\begin{align*}
\left(T_{\delta_{x_{0}}}T_{\delta_{x_{0}}}^{*}f\right)\left(\cdot\right) & =K\left(\cdot,x_{0}\right)f\left(x_{0}\right),\\
\left(T_{\delta_{x_{0}}}^{*}T_{\delta_{x_{0}}}K\left(\cdot,x\right)\right)\left(z\right) & =\overline{K\left(x_{0},z\right)}K\left(x_{0},x\right),\quad\forall\left(x,z\right)\in X\times X.
\end{align*}
\end{lem}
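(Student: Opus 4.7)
The plan is to reduce everything to a direct computation using the adjoint formula from \lemref{b5} and the reproducing property, exploiting the fact that $\mu=\delta_{x_0}$ is a point mass, so integration collapses to point evaluation.

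First I would check that the pair $(K,\delta_{x_0})$ is admissible in the sense of \defref{b3}, which is essentially immediate: $L^{2}(\delta_{x_0})$ is one-dimensional (functions are identified by their value at $x_0$), and $T_{\delta_{x_0},K}:K(\cdot,y)\mapsto K(\cdot,y)$ sends $f\in\mathscr{H}_{K}$ to $f(x_0)$ (as an element of $L^2(\delta_{x_0})$), so $T_{\delta_{x_0},K}$ is bounded by $\sqrt{K(x_0,x_0)}$ via the reproducing property $|f(x_0)|=|\langle f,K(\cdot,x_0)\rangle_{\mathscr{H}_K}|$. Hence \lemref{b5} applies, and
\[
\bigl(T_{\delta_{x_0},K}^{*}f\bigr)(\cdot)=\int K(\cdot,y)f(y)\,\delta_{x_0}(dy)=K(\cdot,x_{0})\,f(x_{0}).
\]

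For the first identity, I would then apply $T_{\delta_{x_0},K}$ to $T_{\delta_{x_0},K}^{*}f=K(\cdot,x_{0})f(x_{0})\in\mathscr{H}_{K}$. Since $T_{\delta_{x_0},K}$ is the canonical inclusion $\mathscr{H}_{K}\hookrightarrow L^{2}(\delta_{x_0})$, the resulting element of $L^{2}(\delta_{x_0})$ is represented by the same function $K(\cdot,x_{0})f(x_{0})$, which is exactly the claimed formula.

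For the second identity, I would read $K(\cdot,x)\in\mathscr{H}_{K}$, push it into $L^{2}(\delta_{x_0})$ via $T_{\delta_{x_0},K}$, and then apply $T_{\delta_{x_0},K}^{*}$ using the explicit formula above. This gives
\[
\bigl(T_{\delta_{x_0},K}^{*}T_{\delta_{x_0},K}\,K(\cdot,x)\bigr)(z)=\int K(z,y)K(y,x)\,\delta_{x_{0}}(dy)=K(z,x_{0})K(x_{0},x).
\]
The last simplification to the form $\overline{K(x_{0},z)}K(x_{0},x)$ follows from the Hermitian symmetry $K(z,x_{0})=\overline{K(x_{0},z)}$, which is automatic for any complex-valued positive definite kernel. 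There is no real obstacle here; the only subtle point is keeping track of which copy of $K(\cdot,x)$ lives in $\mathscr{H}_{K}$ versus $L^{2}(\delta_{x_0})$, and verifying admissibility so that \lemref{b5} is legitimately available.
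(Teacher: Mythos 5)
Your proof is correct and follows essentially the same route as the paper: specialize the adjoint formula of Lemma \ref{lem:b5} to $\mu=\delta_{x_{0}}$ so the integral collapses to evaluation at $x_{0}$, then compose with the inclusion $T_{\delta_{x_{0}},K}$. The only additions are your explicit admissibility/boundedness check and the use of Hermitian symmetry $K\left(z,x_{0}\right)=\overline{K\left(x_{0},z\right)}$, both of which the paper leaves implicit.
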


\begin{proof}
One checks that 
\[
\left(T_{\delta_{x_{0}}}^{*}f\right)\left(\cdot\right)=\int K\left(\cdot,y\right)f\left(y\right)\mu\left(dy\right)=K\left(\cdot,x_{0}\right)f\left(x_{0}\right),
\]
and so 
\[
T_{\delta_{x_{0}}}^{*}T_{\delta_{x_{0}}}K\left(\cdot,x\right)=K\left(\cdot,x_{0}\right)K\left(x_{0},x\right).
\]
\end{proof}

\section{\label{sec:OS}Optimal feature selections}

In the remaining of the paper, we formulate three versions of the
general optimization problem as presented in outline in \secref{OW},
i.e., the problem optimization over suitable choices of convex sets
of kernels $K$. In brief outline, the three variants are as follows:
(i) In \secref{OS}, the optimalization entails just the $\mathscr{H}_{K}$-norm\textsuperscript{2}
applied to the optimal $f^{\varphi,K}$ from \lemref{B2}. (ii) In
\secref{OA}, a different measure of \textquotedblleft optimal\textquotedblright{}
is used, with solution formula as in \corref{d2}. Finally, (iii)
in \secref{CME}, our optimization is obtained, and it makes use of
the CME approach.

Hence \secref{OS} presents some cases of non-existence of optimizers.
This in turn serves to motivate our affirmative optimization results
in Sections \ref{sec:OA} \& \ref{sec:CME}, especially \thmref{d4},
\corref{d5}, and \thmref{f7}.

Our assumptions below remain as mentioned above. In particular, a
fixed function $\varphi$ is specified (representing \textquotedblleft training
data.\textquotedblright ). Also given is a positive sigma-finite measure
$\mu$. We assume $\varphi\in L^{2}\left(\mu\right)$. Our analysis
of feature selection is based on both regression starting with a p.d.
kernel $K$, as well as a variation for choices of p.d. kernels $K$.
Each $K$ yields a selection of admissible features. But a \textquotedblleft good\textquotedblright{}
choice of $K$ yields corresponding optimal feature functions $f^{\varphi,K}$,
thus representing more distinct features, reflected in feature functions
$f^{\varphi,K}$ with large $\mathscr{H}_{K}$-norm\textsuperscript{2},
i.e., large variance. Optimal choices of $K$ typically represent
more successful discrimination by features resulting from input of
a particular training data, the function $\varphi$. More precisely,
the $\mathscr{H}_{K}$-norm\textsuperscript{2} refers to the features
entailed by a choice of $K$. By contrast, the training data represented
by $\varphi$ is fixed.
\begin{lem}
\label{lem:c1}With $\mu,K$ fixed, $K\in\mathscr{K}\left(\mu\right)$.
Let $f^{\varphi,K}$ be as specified in (\ref{eq:a2}). Then 
\begin{align}
\left\Vert f^{\varphi,K}\right\Vert _{\mathscr{H}_{K}}^{2} & =\left\langle \varphi,T_{\mu,K}T_{\mu,K}^{*}\left(\alpha+T_{\mu,K}T_{\mu,K}^{*}\right)^{-2}\varphi\right\rangle _{L^{2}\left(\mu\right)}\\
 & =\left\Vert \left(T_{\mu,K}T_{\mu,K}^{*}\right)^{1/2}\left(\alpha+T_{\mu,K}T_{\mu,K}^{*}\right)^{-1}\varphi\right\Vert _{L^{2}\left(\mu\right)}^{2}\label{eq:C2}\\
 & =\int\frac{x}{\left(\alpha+x\right)^{2}}\left\Vert Q^{\mu,K}\left(dx\right)\varphi\right\Vert _{L^{2}\left(\mu\right)}^{2},
\end{align}
where $Q^{\mu,K}\left(\cdot\right)$ is the spectral measure of the
operator $T_{\mu,K}T_{\mu,K}^{*}$, i.e., 
\begin{equation}
T_{\mu,K}T_{\mu,K}^{*}=\int_{0}^{\infty}x\,Q^{\mu,K}\left(dx\right).
\end{equation}
\end{lem}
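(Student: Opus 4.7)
The plan is to substitute the explicit formula for $f^{\varphi,K}$ from \eqref{eq:a2}, push the adjoint across the inner product, and then reduce to a scalar spectral integral via the functional calculus of the self-adjoint operator $A := T_{\mu,K}T_{\mu,K}^{*}$.

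First I would write
\[
\big\|f^{\varphi,K}\big\|_{\mathscr{H}_K}^{2}
= \big\langle T_{\mu,K}^{*}(\alpha+A)^{-1}\varphi,\; T_{\mu,K}^{*}(\alpha+A)^{-1}\varphi\big\rangle_{\mathscr{H}_K},
\]
and then move one copy of $T_{\mu,K}^{*}$ across using the defining relation $\langle T_{\mu,K}^{*}g,h\rangle_{\mathscr{H}_K}=\langle g, T_{\mu,K}h\rangle_{L^{2}(\mu)}$ from \lemref{b5}. Since $(\alpha+A)^{-1}$ is a bounded Borel function of the self-adjoint operator $A$, it commutes with $A$ in the sense of the functional calculus, so rearranging the resulting expression produces
\[
\big\|f^{\varphi,K}\big\|_{\mathscr{H}_K}^{2}
=\big\langle \varphi,\; A(\alpha+A)^{-2}\varphi\big\rangle_{L^{2}(\mu)},
\]
which is exactly the first claimed identity.

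Next, since $A\ge 0$, the operator $A^{1/2}$ is well defined via the spectral theorem and commutes with $(\alpha+A)^{-1}$; factoring $A(\alpha+A)^{-2}=\big(A^{1/2}(\alpha+A)^{-1}\big)^{*}\big(A^{1/2}(\alpha+A)^{-1}\big)$ yields
\[
\big\langle \varphi, A(\alpha+A)^{-2}\varphi\big\rangle_{L^{2}(\mu)}
= \big\|A^{1/2}(\alpha+A)^{-1}\varphi\big\|_{L^{2}(\mu)}^{2},
\]
which is \eqref{eq:C2}. For the third equality I would invoke the spectral resolution $A=\int_{0}^{\infty}x\,Q^{\mu,K}(dx)$: the Borel function $g(x)=x^{1/2}/(\alpha+x)$ is bounded on $[0,\infty)$, so the functional calculus gives $A^{1/2}(\alpha+A)^{-1}=\int g(x)\,Q^{\mu,K}(dx)$, and the standard identity $\|\int g(x)Q^{\mu,K}(dx)\varphi\|^{2}=\int |g(x)|^{2}\,\|Q^{\mu,K}(dx)\varphi\|^{2}$ produces the integral $\int \tfrac{x}{(\alpha+x)^{2}}\|Q^{\mu,K}(dx)\varphi\|_{L^{2}(\mu)}^{2}$.

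The only delicate point is that $T_{\mu,K}$ need not be bounded; what the \emph{admissibility} in \defref{b3} guarantees is that $T_{\mu,K}$ is closable, hence $T_{\mu,K}^{*}$ is densely defined and $A=T_{\mu,K}T_{\mu,K}^{*}$ (or, more carefully, the Friedrichs/polar construction) is self-adjoint and non-negative. Thus $(\alpha+A)^{-1}$ is a bounded everywhere-defined operator for $\alpha>0$, the adjoint relation used to push $T_{\mu,K}^{*}$ across applies on the range of $(\alpha+A)^{-1}\varphi$ (which lies in the domain of $A$, hence of $T_{\mu,K}^{*}$ in the closed polar factorization sense), and the spectral-calculus manipulations above are justified term by term. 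This unbounded-operator bookkeeping is the only real obstacle; once it is in place, the three equalities follow as described.
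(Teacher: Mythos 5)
Your argument is correct and follows essentially the same route as the paper: substitute the closed-form expression for $f^{\varphi,K}$, move one factor of $T_{\mu,K}^{*}$ across the inner product via the adjoint relation to get $\langle\varphi,\,TT^{*}(\alpha+TT^{*})^{-2}\varphi\rangle_{L^{2}(\mu)}$, and then read off the remaining two identities from the spectral theorem for the non-negative self-adjoint operator $TT^{*}$. The paper only writes out the first step explicitly and leaves the spectral-calculus reformulations and the unbounded-operator bookkeeping (closability of $T_{\mu,K}$, self-adjointness of $TT^{*}$, and the domain condition $(\alpha+TT^{*})^{-1}\varphi\in\operatorname{dom}(TT^{*})$) implicit, so your write-up simply supplies details the authors omitted.
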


\begin{proof}
Let $T:=T_{\mu,K}$. Note that $TT^{*}\left(\alpha+TT^{*}\right)^{-2}:L^{2}\left(\mu\right)\rightarrow L^{2}\left(\mu\right)$
is a bounded operator. We have 
\begin{eqnarray*}
 &  & \left\langle T^{*}\left(\alpha+TT^{*}\right)^{-1}\varphi,T^{*}\left(\alpha+TT^{*}\right)^{-1}\varphi\right\rangle _{\mathscr{H}_{K}}\\
 & = & \left\langle \left(\alpha+TT^{*}\right)^{-1}\varphi,TT^{*}\left(\alpha+TT^{*}\right)^{-1}\varphi\right\rangle _{L^{2}\left(\mu\right)}\\
 & = & \left\langle \varphi,TT^{*}\left(\alpha+TT^{*}\right)^{-2}\varphi\right\rangle _{L^{2}\left(\mu\right)}.
\end{eqnarray*}
\end{proof}
\begin{rem}
Note that, if $\left\Vert T_{\mu,K}T_{\mu,K}^{*}\right\Vert <\alpha$,
then the function $K\longmapsto\left\Vert f^{\varphi,K}\right\Vert _{\mathscr{H}_{K}}^{2}$is
monotone relative to the order of kernels: $K\ll K'\Longleftrightarrow\int\varphi K\varphi\,d\mu\leq\int\varphi K'\varphi\,d\mu$.
In that case, we need only optimize with respect to the spectral measure
of the kernel $K\in\mathscr{K}\left(\mu\right)$, with $\mu$ fixed. 
\end{rem}

\begin{example}
If $\mu=\delta_{x_{0}}$ as in \lemref{a9}, then 
\[
\left(T_{\delta_{x_{0}}}T_{\delta_{x_{0}}}^{*}\psi\right)\left(\cdot\right)=K\left(\cdot,x_{0}\right)\psi\left(x_{0}\right)\in L^{2}\left(X,\delta_{x_{0}}\right).
\]
And 
\[
\left\langle \varphi,T_{\delta_{x_{0}}}T_{\delta_{x_{0}}}^{*}\psi\right\rangle _{L^{2}\left(\delta_{x_{0}}\right)}=K\left(x_{0},x_{0}\right)\overline{\varphi\left(x_{0}\right)}\psi\left(x_{0}\right).
\]

Similarly, for pure atomic measures $\mu=\sum_{i}\alpha_{i}\delta_{x_{i}}$,
we have 
\[
\left\langle \varphi,T_{\mu}T_{\mu}^{*}\psi\right\rangle _{L^{2}\left(\mu\right)}=\sum_{i}\sum_{j}\overline{\alpha_{i}}\alpha_{j}K\left(x_{i},x_{j}\right)\overline{\varphi\left(x_{i}\right)}\psi\left(x_{j}\right)
\]
for all $\varphi,\psi\in L^{2}\left(\mu\right)$. 
\end{example}

Below we fix a positive measure $\mu$ as per \defref{b3}. When an
ONB is fixed in the corresponding $L^{2}(\mu)$ we then arrive at
a convex set $C_{\mu}$ of Mercer kernels $K$, see (\ref{eq:c5}):
$C_{\mu}$ is specified as in (\ref{eq:c5}) below. So, these p.d.
kernels $K$, and the corresponding RKHSs, are determined by the spectral
data (\ref{eq:c8}). As a consequence, we see that the optimal feature
vector may be found via a solution to this convex optimization problem
for $K$ in $C_{\mu}$. Further note that the spectral data used in
the case (of Mercer kernels) is a special case of the general structure
presented in \lemref{c1} above. Indeed, the reader can verify that
the optimization algorithm presented below for the case of Mercer
kernels generalizes to more general cases of convex sets of p.d. kernels
as per \lemref{c1} above.
\begin{thm}
\label{thm:c4}Fix $\mu$, and let $K\in\mathscr{K}\left(\mu\right)$.
Let $\left\{ e_{i}\right\} _{i\in\mathbb{N}}$ be an ONB in $L^{2}\left(\mu\right)$,
and consider the Mercer kernel
\begin{equation}
K\left(x,y\right)=\sum\lambda_{i}e_{i}\left(x\right)e_{i}\left(y\right)\label{eq:c5}
\end{equation}
with $\lambda_{i}>0$, and $\sum\lambda_{i}=1$.

In this case, 
\begin{equation}
\left\langle \varphi,T_{K}T_{K}^{*}\psi\right\rangle _{L^{2}\left(\mu\right)}=\sum\lambda_{i}\left\langle \varphi,e_{i}\right\rangle _{L^{2}\left(\mu\right)}\left\langle e_{i},\psi\right\rangle _{L^{2}\left(\mu\right)}.
\end{equation}
Let $f^{K}$ be the optimal solution as in (\ref{eq:a2}). Then, 
\begin{equation}
\left\Vert f^{K}\right\Vert _{\mathscr{H}_{K}}^{2}=\sum\frac{\lambda_{i}}{\left(\alpha+\lambda_{i}\right)^{2}}\left|\left\langle \varphi,e_{i}\right\rangle \right|^{2}.\label{eq:c7}
\end{equation}

Moreover, consider the optimization problem: 
\begin{equation}
\left\{ \begin{alignedat}{1} & \max_{\left(\lambda_{i}\right)}\sum\frac{\lambda_{i}}{\left(\alpha+\lambda_{i}\right)^{2}}c_{i}\\
 & \sum\lambda_{i}=1,\quad\lambda_{i}\geq0\\
 & \sum c_{i}=\left\Vert \varphi\right\Vert _{L^{2}\left(\mu\right)}^{2},\quad c_{i}:=\left|\left\langle e_{i},\varphi\right\rangle \right|^{2}\geq0
\end{alignedat}
\right.\label{eq:c8}
\end{equation}
The solution $\left(\lambda_{i}^{max}\right)$ satisfies that 
\begin{equation}
\frac{\lambda_{i}^{max}}{\left(\alpha+\lambda_{i}^{max}\right)^{2}}=\xi c_{i}\label{eq:c9}
\end{equation}
for some constant $\xi\in\mathbb{R}_{+}$. 
\end{thm}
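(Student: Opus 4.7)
The plan is to establish the three assertions of the theorem in order.

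Steps 1 and 2 hinge on a single observation: the ONB $\{e_i\}$ of $L^2(\mu)$ simultaneously diagonalizes $T_{\mu,K}T_{\mu,K}^{*}$ with eigenvalues $\lambda_i$. Starting from the formula $(T_{\mu,K}^{*}f)(x)=\int K(x,y)f(y)\,\mu(dy)$ of Lemma \ref{lem:b5}, I would substitute the Mercer expansion \eqref{eq:c5}, use orthonormality of $\{e_i\}$ to collapse the inner integral to a Kronecker $\delta$, and read off $T_{\mu,K}^{*}e_j=\lambda_j e_j$ as an element of $\mathscr{H}_K$. Composing with the inclusion $T_{\mu,K}:\mathscr{H}_K\hookrightarrow L^2(\mu)$ then yields $T_KT_K^{*}e_j=\lambda_j e_j$ in $L^2(\mu)$; expanding $\varphi,\psi\in L^2(\mu)$ in $\{e_i\}$ produces the stated bilinear-form identity. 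The norm formula \eqref{eq:c7} then follows by inserting this spectral decomposition into identity \eqref{eq:C2} of Lemma \ref{lem:c1} and applying bounded Borel functional calculus: the operator $(T_KT_K^{*})^{1/2}(\alpha+T_KT_K^{*})^{-1}$ acts on each $e_i$ by the scalar $\sqrt{\lambda_i}/(\alpha+\lambda_i)$, so the $L^2(\mu)$-norm squared of its image on $\varphi$ equals $\sum \lambda_i(\alpha+\lambda_i)^{-2}c_i$.

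For Step 3 (the optimization), the probability simplex $\Delta=\{\lambda\in\ell^1:\lambda_i\geq 0,\ \sum\lambda_i=1\}$ is convex and weak-$*$ compact, and the objective $F(\lambda)=\sum c_i\lambda_i/(\alpha+\lambda_i)^2$ is upper semicontinuous there with $\sum c_i=\|\varphi\|_{L^2(\mu)}^{2}<\infty$, so a maximizer $(\lambda_i^{\max})$ exists. I would then apply the method of Lagrange multipliers: introducing a multiplier $\xi$ for the equality $\sum\lambda_i=1$ and slack multipliers for the inequalities $\lambda_i\geq 0$, the first-order stationarity condition produces a scalar identity coupling $c_i$, $\lambda_i^{\max}$, and $\xi$; rearranging this yields the stated form \eqref{eq:c9}, with $\xi\in\mathbb{R}_{+}$ identified as (a function of) the Lagrange multiplier.

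The main obstacle lies in Step 3. One must handle the infinite-dimensional structure of the constraint set with care — choosing the right topology for both compactness of $\Delta$ and upper semicontinuity of $F$, treating boundary indices where $\lambda_i^{\max}=0$ via KKT slackness, and then verifying that the resulting first-order condition can be recast in the precise form \eqref{eq:c9}. By contrast, Steps 1 and 2 are largely routine once the eigenvalue relation $T_KT_K^{*}e_j=\lambda_j e_j$ is in hand.
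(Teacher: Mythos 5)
Your Steps 1 and 2 are fine and are essentially the paper's implicit argument: diagonalize $T_{K}T_{K}^{*}$ in the ONB $\{e_i\}$ (so $T_{K}T_{K}^{*}e_i=\lambda_i e_i$) and plug the resulting spectral decomposition into identity (\ref{eq:C2}) of Lemma \ref{lem:c1} to get (\ref{eq:c7}).

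The genuine gap is in Step 3, in the last sentence of your plan. Carry out the Lagrangian computation you propose: since $\frac{\partial}{\partial\lambda_i}\Bigl[\frac{\lambda_i}{(\alpha+\lambda_i)^2}c_i\Bigr]=\frac{(\alpha-\lambda_i)c_i}{(\alpha+\lambda_i)^{3}}$, KKT stationarity with a multiplier $\xi$ for the constraint $\sum\lambda_i=1$ gives, on the support of $\lambda^{\max}$, the relation $(\alpha-\lambda_i^{\max})c_i=\xi(\alpha+\lambda_i^{\max})^{3}$. This is precisely the condition appearing in the finite-dimensional corollary that the paper proves immediately after the theorem by exactly your Lagrange-multiplier method; it is \emph{not} condition (\ref{eq:c9}), and no rearrangement turns it into (\ref{eq:c9}): from the KKT relation one gets $\frac{\lambda_i^{\max}}{(\alpha+\lambda_i^{\max})^{2}c_i}=\frac{\lambda_i^{\max}(\alpha-\lambda_i^{\max})}{\xi(\alpha+\lambda_i^{\max})^{5}}$, which still depends on $i$ through $\lambda_i^{\max}$, so it cannot collapse to a single constant as (\ref{eq:c9}) demands. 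The paper's own route to (\ref{eq:c9}) is different: it asserts equality in a Cauchy--Schwarz estimate, i.e.\ proportionality of the sequence $\bigl(\lambda_i/(\alpha+\lambda_i)^{2}\bigr)$ to $(c_i)$, rather than first-order stationarity. So your announced derivation of (\ref{eq:c9}) would fail at its final step; to recover the theorem's statement you would have to switch to (or justify) the proportionality/Cauchy--Schwarz argument, which is a separate issue from the KKT analysis.

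A secondary repair is needed in your existence argument: the probability simplex in $\ell^{1}$ is not weak-$*$ compact when $\ell^{1}$ is viewed as the dual of $c_{0}$, because the constraint $\sum\lambda_i=1$ is pairing against $\mathbbm{1}\notin c_{0}$ and is not weak-$*$ closed (the standard basis vectors converge weak-$*$ to $0$). Mass can escape to infinity, so attainment of the supremum requires an additional argument (e.g.\ handling sub-probability limits using monotonicity of $x\mapsto x/(\alpha+x)^{2}$ on $[0,\alpha]$, or restricting to finitely supported spectra as the paper's corollary does).
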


\begin{figure}
\includegraphics[width=0.45\textwidth]{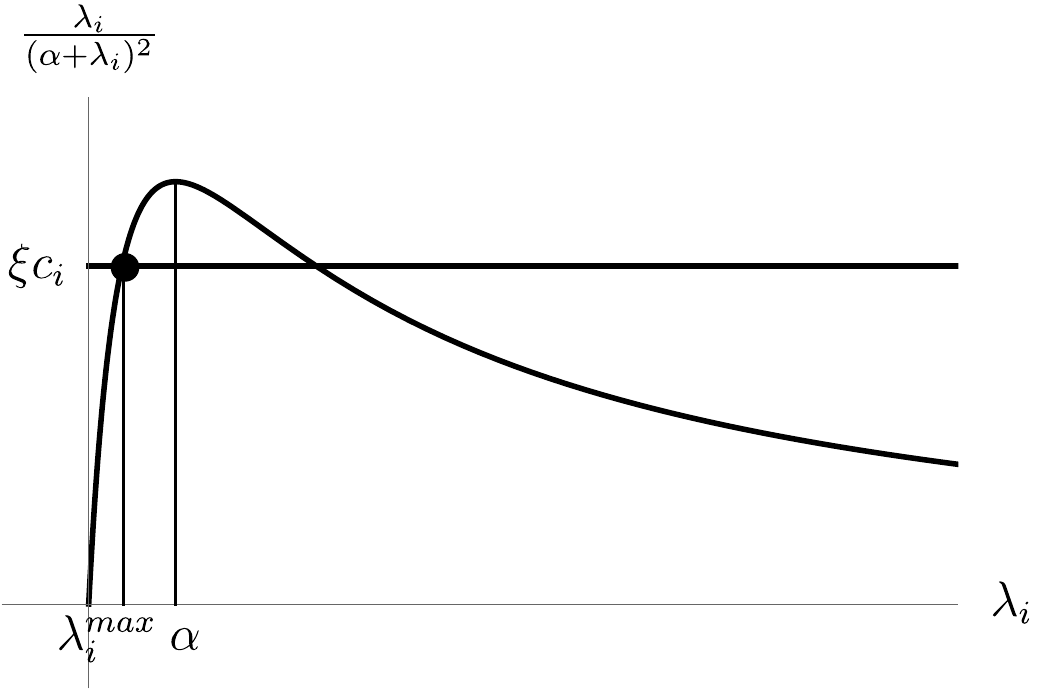}

\caption{\label{fig:sd}Spectral distribution in (\ref{eq:c9}).}
\end{figure}

\begin{proof}
The condition in (\ref{eq:c9}) follows from an application of the
Cauchy-Schwarz inequality. 

The fact that the solution $\left(\lambda_{i}^{max}\right)$ to (\ref{eq:c9})
in fact represents the solution to the optimization (\ref{eq:c8})
follows from the observation that the one term in the $l^{2}$ inner
product is fixed, so the max in (\ref{eq:c8}) is attained when quality
holds in the corresponding Cauchy-Schwarz Inequality. Further note
that, for every fixed value of the index $i$, (\ref{eq:c9}) is simply
a quadratic equation (see also \figref{sd}), and the optimal spectral
distribution $\left(\lambda_{i}^{max}\right)$ is explicit. The form
of the optimal p.d. kernel $K$ then follows by substitution of $\left(\lambda_{i}^{max}\right)$
into (\ref{eq:c5}).
\end{proof}
\begin{cor}
Consider the finite-dimensional case, i.e., $\mu$ is atomic, where
$K=\sum_{i=1}^{N}\lambda_{i}e_{i}\left(x\right)e_{i}\left(y\right)$,
with $\left\{ e_{i}\right\} _{i=1}^{N}$ an ONB in $L^{2}\left(\mu\right)$.
Then the optimization problem 
\begin{equation}
\left\{ \begin{alignedat}{1} & \max_{\left(\lambda_{i}\right)}\sum_{i=1}^{N}\frac{\lambda_{i}}{\left(\alpha+\lambda_{i}\right)^{2}}c_{i}\\
 & \sum_{i=1}^{N}\lambda_{i}=1,\quad\lambda_{i}\geq0\\
 & \sum_{i=1}^{N}c_{i}=\left\Vert \varphi\right\Vert _{L^{2}\left(\mu\right)}^{2},\quad c_{i}:=\left|\left\langle e_{i},\varphi\right\rangle \right|^{2}\geq0
\end{alignedat}
\right.
\end{equation}
has solution $\left(\lambda_{i}^{max}\right)$ determined by 
\begin{equation}
\left(\alpha-\lambda_{i}^{max}\right)c_{i}=A_{N}\left(\alpha+\lambda_{i}^{max}\right)^{3}.
\end{equation}
See \figref{root}.

Moreover, we have 
\begin{equation}
\left\Vert f_{N}^{K}\right\Vert _{\mathscr{H}_{K}}^{2}=\sum_{i=1}^{N}\frac{\lambda_{i}^{max}}{\left(\alpha+\lambda_{i}^{max}\right)^{2}}c_{i}=\sum_{i=1}^{N}\frac{A_{N}^{2/3}\lambda_{i}^{max}c_{i}^{1/3}}{\left(\alpha-\lambda_{i}^{max}\right)^{2/3}}.\label{eq:c12}
\end{equation}
\end{cor}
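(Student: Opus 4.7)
The plan is to reduce the Corollary to a standard Lagrange multiplier calculation and then perform an algebraic rearrangement. Since the feasible set
\[
\Delta_N=\Bigl\{(\lambda_i)_{i=1}^N:\lambda_i\geq 0,\;\sum_{i=1}^N\lambda_i=1\Bigr\}
\]
is compact and the objective
\[
F(\lambda)=\sum_{i=1}^N g(\lambda_i)c_i,\qquad g(t):=\frac{t}{(\alpha+t)^2},
\]
is continuous, a maximizer $(\lambda_i^{\max})$ exists. Because $g(0)=0$ while $g(t)>0$ for $t>0$, any optimizer must put positive mass on at least the indices with $c_i>0$, and for these indices I can invoke the Lagrange condition with the equality constraint $\sum\lambda_i=1$.

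Next I would set up the Lagrangian $\mathcal{L}(\lambda,\beta)=F(\lambda)-\beta\bigl(\sum_i\lambda_i-1\bigr)$ and compute $\partial_{\lambda_i}\mathcal{L}=0$. A direct differentiation gives
\[
g'(t)=\frac{(\alpha+t)^2-2t(\alpha+t)}{(\alpha+t)^4}=\frac{\alpha-t}{(\alpha+t)^3},
\]
so the first-order condition reads $c_i\dfrac{\alpha-\lambda_i^{\max}}{(\alpha+\lambda_i^{\max})^3}=\beta$. Setting $A_N:=\beta$ yields exactly
\[
(\alpha-\lambda_i^{\max})\,c_i=A_N\,(\alpha+\lambda_i^{\max})^3,
\]
which is the asserted characterization. (The root structure of this cubic in $\lambda_i^{\max}$ is what Figure~\ref{fig:root} illustrates.)

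For the norm identity in (\ref{eq:c12}), the first equality is just (\ref{eq:c7}) of \thmref{c4} specialized to a finite index set. For the second, I solve the critical equation for $c_i$,
\[
c_i=\frac{A_N\,(\alpha+\lambda_i^{\max})^3}{\alpha-\lambda_i^{\max}},
\qquad
c_i^{1/3}=\frac{A_N^{1/3}(\alpha+\lambda_i^{\max})}{(\alpha-\lambda_i^{\max})^{1/3}},
\]
substitute into $\lambda_i^{\max}c_i/(\alpha+\lambda_i^{\max})^2$, and observe that both sides simplify to the common expression $A_N\,\lambda_i^{\max}(\alpha+\lambda_i^{\max})/(\alpha-\lambda_i^{\max})$.

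The genuine obstacle is not the algebra but the fact that $g$ is not globally concave: $g''(t)=2(t-2\alpha)/(\alpha+t)^4$ changes sign at $t=2\alpha$, so $F$ need not be concave on $\Delta_N$ and the Lagrange stationary points are in principle only necessary conditions. To justify that the cubic equation characterizes the actual maximizer one has to combine the compactness of $\Delta_N$ with the KKT conditions: at a maximizer either $\lambda_i^{\max}=0$ (in which case $c_i=0$, since $g'(0)c_i=c_i/\alpha^2$ would otherwise allow an increase of $F$ by shifting mass from an index with small $g'$) or the Lagrange equation holds. In the generic case $c_i>0$ for all $i$ this confines the search to interior critical points, among which the maximum is then identified, yielding the stated formula.
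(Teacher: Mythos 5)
Your proposal is correct and follows essentially the same route as the paper: the identical Lagrangian, the same first-order condition $\left(\alpha-\lambda_{i}^{max}\right)c_{i}=A_{N}\left(\alpha+\lambda_{i}^{max}\right)^{3}$, and the same algebraic substitution back into the norm formula (\ref{eq:c7}) to obtain (\ref{eq:c12}). Your added observations---existence of a maximizer by compactness, the KKT treatment of boundary indices, and the caveat that $t\mapsto t/(\alpha+t)^{2}$ is not concave so stationarity is only a necessary condition---go beyond what the paper records (it only carries out the stationarity computation) and serve as a sensible supplement rather than a different method.
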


\begin{proof}
Let $L$ be the Lagrangian, where
\[
L=\sum_{i=1}^{N}\frac{\lambda_{i}}{\left(\alpha+\lambda_{i}\right)^{2}}c_{i}-A_{N}\left(\sum_{i=1}^{N}\lambda_{i}-1\right).
\]
Then, 
\begin{gather*}
\frac{\partial L}{\partial\lambda_{i}}=\frac{\alpha^{2}-\lambda_{i}^{2}}{\left(\alpha+\lambda_{i}\right)^{4}}c_{i}-A_{N}=0\\
\Updownarrow\\
\left(\alpha-\lambda_{i}\right)c_{i}=A_{N}\left(\alpha+\lambda_{i}\right)^{3}.
\end{gather*}
This yields 
\[
\left(\alpha+\lambda_{i}\right)^{2}=\frac{\left(\alpha-\lambda_{i}\right)^{2/3}c_{i}^{2/3}}{A_{N}^{2/3}}
\]
so that 
\[
\sum_{i=1}^{N}\frac{\lambda_{i}^{max}}{\left(\alpha+\lambda_{i}^{max}\right)^{2}}c_{i}=\sum_{i=1}^{N}\frac{\lambda_{i}^{max}}{\left(\alpha-\lambda_{i}^{max}\right)^{2/3}c_{i}^{2/3}}A_{N}^{2/3}c_{i}=\sum_{i=1}^{N}\frac{\lambda_{i}^{max}}{\left(\alpha-\lambda_{i}^{max}\right)^{2/3}}A_{N}^{2/3}c_{i}^{1/3}
\]
which is (\ref{eq:c12}).
\end{proof}
\begin{figure}
\includegraphics[width=0.45\columnwidth]{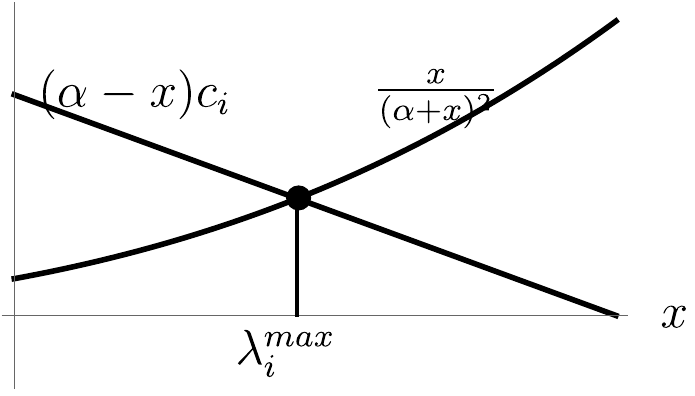}

\caption{\label{fig:root} The solution $\left(\lambda_{i}^{max}\right)$ determined
by the intersection of two curves.}

\end{figure}

\section{\label{sec:OA}Optimization in an ambient Hilbert space}

The setting below is as in the previous sections: input is specified
by two parts, first a fixed measure $\mu$, and secondly, an input
of functions $\varphi$ from $L^{2}\left(\mu\right)$, with the variety
of functions representing training data in the model. We then examine
optimal choices for p.d. kernels $K$ with view to optimization of
$K$-features for the corresponding kernel-learning, see \secref{OW}
above. The choices of optimal kernels $K$ are made precise in Theorems
\ref{thm:c4}, and in \ref{thm:d4} below. In both cases, an ONB in
$L^{2}(\mu)$ is chosen, and we study the corresponding convex sets
of Mercer kernels $K$ as specified in (\ref{eq:c5}). Here, then
each $K$ is determined by a spectral distribution $\left\{ \lambda_{i}\right\} $.
The corresponding optimization quantity is from (\ref{eq:a2}), and
it has a $\mathscr{H}_{K}$ penalty term weighted with an assigned
parameter $\alpha$, see (\ref{eq:a1}). We then arrive at an optimal
feature vector $f^{K}$ for every $K$, and we consider its $K$-variance,
measured with the use of the $\mathscr{H}_{K}$ norm-squared. Two
such variance measures are considered, (\ref{eq:c7}) and (\ref{eq:d14}).
For the first one, we note a singularity blowup for values of $\lambda_{i}$
close to $\alpha$. In the second case, the dependence on $K$ takes
a different form; we show that then the $K$-variance (see (\ref{eq:d14}))
is monotone, in a sense made precise in \thmref{d4}, and \corref{d5}
(spectral \emph{a priori} error-bounds).

Finally, in \secref{CME}, we present a solution to the feature optimization
with the use of a conditional-mean embedding, CME. The latter refers
to (i) a choice of probability space, (ii) a family of p.d. kernels
$L$, and corresponding conditional mean embeddings into the RKHSs
$\mathscr{H}_{L}$.

Below we first recall some basic facts from operator theory. Let $T:\mathscr{H}_{1}\rightarrow\mathscr{H}_{2}$
be a closed, densely defined linear operator between Hilbert spaces.
On $\mathscr{H}_{1}\times\mathscr{H}_{2}$, define the inner product
\begin{equation}
\left\langle \left(u_{1},v_{1}\right),\left(u_{2},v_{2}\right)\right\rangle _{\mathscr{H}_{1}\times\mathscr{H}_{2}}:=\alpha\left\langle u_{1},v_{1}\right\rangle _{\mathscr{H}_{1}}+\left\langle u_{2},v_{2}\right\rangle _{\mathscr{H}_{2}}\label{eq:d1}
\end{equation}
where $\alpha$ is a positive constant.

Define $W:\mathscr{H}_{1}\rightarrow\mathscr{H}_{1}\times\mathscr{H}_{2}$
by 
\[
W\left(u\right)=\left(u,Tu\right),\quad u\in dom\left(T\right),
\]
Then the projection from $\mathscr{H}_{1}\times\mathscr{H}_{2}$ onto
$ran\left(W\right)$ is 
\begin{equation}
\begin{bmatrix}\alpha\left(\alpha I_{1}+T^{*}T\right)^{-1} & T^{*}\left(\alpha I_{2}+TT^{*}\right)^{-1}\\
\alpha T\left(\alpha I_{1}+T^{*}T\right)^{-1} & TT^{*}\left(\alpha I_{2}+TT^{*}\right)^{-1}
\end{bmatrix}.\label{eq:d2}
\end{equation}

\begin{rem}
Recall that in our general setup for regression optimization, we have
arranged that training data may be represented via an operator $T$
in Hilbert space. Note that, if $\alpha=1$, then the block matrix
in (\ref{eq:d2}) represents the projection from the direct sum $\mathscr{H}_{1}\oplus\mathscr{H}_{2}$
onto the graph of the operator $T$, see e.g., \cite[Corollary 1.55]{MR4274591}.
\end{rem}

\begin{cor}
\label{cor:d2}Let $T:\mathscr{H}_{1}\rightarrow\mathscr{H}_{2}$
be as above. Then, for all $v\in\mathscr{H}_{2}$, we have 
\begin{align}
u^{*} & =\mathbb{\text{argmin}}\left\{ \alpha\left\Vert u\right\Vert _{\mathscr{H}_{1}}^{2}+\left\Vert Tu-v\right\Vert _{\mathscr{H}_{2}}^{2}:u\in\mathscr{H}_{1}\right\} \nonumber \\
 & =\mathop{\text{argmin}}\left\{ \left\Vert Wu-\left(0,v\right)\right\Vert _{\mathscr{H}_{1}\times\mathscr{H}_{2}}:u\in\mathscr{H}_{1}\right\} \nonumber \\
 & =T^{*}\left(\alpha I_{2}+TT^{*}\right)^{-1}v.\label{eq:d5}
\end{align}
\end{cor}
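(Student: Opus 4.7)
The plan is to reduce the three-way equality in \corref{d2} to a direct reading of the projection formula (\ref{eq:d2}), applied to the vector $(0,v)\in\mathscr{H}_1\times\mathscr{H}_2$. The first equality is essentially a translation of notation; the second is the standard fact that a least-squares distance is attained at the orthogonal projection; and the third identifies that projection via the explicit block-matrix formula already recorded above the statement.

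First I would verify that for $u\in\operatorname{dom}(T)$,
\[
\|Wu-(0,v)\|_{\mathscr{H}_1\times\mathscr{H}_2}^{2}
=\alpha\|u\|_{\mathscr{H}_1}^{2}+\|Tu-v\|_{\mathscr{H}_2}^{2},
\]
which is immediate from the $\alpha$-weighted inner product defined in (\ref{eq:d1}). This identity establishes the equality of the first two argmin expressions in (\ref{eq:d5}) and shows that the optimization in $\mathscr{H}_1$ is the same as minimizing the distance from $(0,v)$ to the subspace $\operatorname{ran}(W)\subset\mathscr{H}_1\times\mathscr{H}_2$.

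Next I would observe that because $T$ is closed, the graph $\operatorname{ran}(W)=\{(u,Tu):u\in\operatorname{dom}(T)\}$ is a closed subspace of $\mathscr{H}_1\times\mathscr{H}_2$, so the minimization distance is attained uniquely at $Wu^{*}=P_{\operatorname{ran}(W)}(0,v)$. Applying the block-matrix formula (\ref{eq:d2}) to the input vector $(0,v)$, the first coordinate of the projection is $T^{*}(\alpha I_{2}+TT^{*})^{-1}v$ and the second coordinate is $TT^{*}(\alpha I_{2}+TT^{*})^{-1}v$, which is consistent: it equals $T$ applied to the first coordinate, confirming that the projected vector indeed lies in $\operatorname{ran}(W)$. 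Reading off the first coordinate yields $u^{*}=T^{*}(\alpha I_{2}+TT^{*})^{-1}v$, which is (\ref{eq:d5}).

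The main technical point to verify (and the only genuine obstacle) is the functional-analytic legitimacy of the expression $T^{*}(\alpha I_{2}+TT^{*})^{-1}$. Since $T$ is closed and densely defined, $TT^{*}$ is self-adjoint and nonnegative on $\mathscr{H}_2$ by the standard von Neumann theorem, so for any $\alpha>0$ the operator $\alpha I_{2}+TT^{*}$ has bounded everywhere-defined inverse with $\|(\alpha I_{2}+TT^{*})^{-1}\|\le\alpha^{-1}$, and moreover $(\alpha I_{2}+TT^{*})^{-1}$ maps $\mathscr{H}_2$ into $\operatorname{dom}(TT^{*})\subset\operatorname{dom}(T^{*})$, so the composition $T^{*}(\alpha I_{2}+TT^{*})^{-1}v$ is defined for every $v\in\mathscr{H}_2$ and lies in $\mathscr{H}_1$. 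This same consideration validates the entries of the block-matrix (\ref{eq:d2}) and confirms that the candidate minimizer $u^{*}$ indeed belongs to $\operatorname{dom}(T)$, completing the argument.
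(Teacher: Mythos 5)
Your proposal is correct and follows essentially the same route as the paper's own proof: rewrite the weighted objective as the squared distance from $(0,v)$ to $\operatorname{ran}(W)$ in $\mathscr{H}_{1}\times\mathscr{H}_{2}$, apply the projection formula (\ref{eq:d2}) to $(0,v)$, and read off the first coordinate. The additional domain verifications you supply (self-adjointness of $TT^{*}$ via von Neumann's theorem and the mapping of $(\alpha I_{2}+TT^{*})^{-1}$ into $\operatorname{dom}(TT^{*})$) are left implicit in the paper but are a sound and worthwhile tightening of the same argument.
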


\begin{proof}
Note that
\[
\alpha\left\Vert u\right\Vert _{\mathscr{H}_{1}}^{2}+\left\Vert Tu-v\right\Vert _{\mathscr{H}_{2}}^{2}=\left\Vert Wu-\left(0,v\right)\right\Vert _{\mathscr{H}_{1}\times\mathscr{H}_{2}}
\]
and the projection of $\left(0,v\right)$ onto $ran\left(W\right)$
is 
\[
\left(T^{*}\left(\alpha I_{2}+TT^{*}\right)^{-1}v,TT^{*}\left(\alpha I_{2}+TT^{*}\right)^{-1}\right)
\]
which is equal to $Wu^{*}=\left(u^{*},Tu^{*}\right)$, for a unique
$u^{*}$ in $\mathscr{H}_{1}$. This gives (\ref{eq:d5}).
\end{proof}
Now, return to optimal feature selections. Fix $\mu$, and consider
kernels $K\in\mathscr{K}\left(\mu\right)$, see \defref{K}. Let $W_{K}:\mathscr{H}_{K}\rightarrow\mathscr{H}_{K}\times L^{2}\left(\mu\right)$,
by 
\[
W_{K}h=\left(h,T_{K,\mu}h\right).
\]
The inner product on $\mathscr{H}_{K}\times L^{2}\left(\mu\right)$
is as in (\ref{eq:d1}), with parameter $\alpha>0$. 

Fix $\varphi\in L^{2}\left(\mu\right)$, then we get a unique $f^{\varphi,K}\in\mathscr{H}_{K}$,
such that $W_{K}f^{\varphi,K}$ is the projection of $\left(0,\varphi\right)\in\mathscr{H}_{K}\times L^{2}\left(\mu\right)$
onto $ran\left(W_{K}\right)$. That is, 
\begin{equation}
f^{\varphi,K}=T_{K,\mu}^{*}\left(\alpha I_{L^{2}}+T_{K,\mu}T_{K,\mu}^{*}\right)^{-1}\varphi\label{eq:d9}
\end{equation}
by \corref{d2}.

This motivates the optimization problem: 
\begin{gather*}
\max_{K\in\mathscr{K}\left(\mu\right)}\left\{ \left\Vert W_{K}f^{\varphi,K}\right\Vert _{\mathscr{H}_{K}\times L^{2}\left(\mu\right)}^{2}\right\} \\
\Updownarrow\\
\max_{K\in\mathscr{K}\left(\mu\right)}\left\{ \alpha\left\Vert f^{\varphi,K}\right\Vert _{\mathscr{H}_{K}}^{2}+\left\Vert T_{K,\mu}f^{\varphi,K}\right\Vert _{L^{2}\left(\mu\right)}^{2}\right\} 
\end{gather*}

Below is a modification of \lemref{c1}.
\begin{lem}
\label{lem:d3}With $\mu,K$ fixed, $K\in\mathscr{K}\left(\mu\right)$.
Let $f^{\varphi,K}$ be as specified above. Then 
\begin{equation}
\left\Vert W_{K}f^{\varphi,K}\right\Vert _{\mathscr{H}_{K}\times L^{2}\left(\mu\right)}^{2}=\int\frac{x}{\alpha+x}\left\Vert Q^{\mu,K}\left(dx\right)\varphi\right\Vert _{L^{2}\left(\mu\right)}^{2},\label{eq:d11}
\end{equation}
where $Q^{K,\mu}\left(dx\right)$ is the spectral projection of the
operator $T_{K,\mu}T_{K,\mu}^{*}$. 

Especially, 
\begin{equation}
\left\Vert W_{K}f^{\varphi,K}\right\Vert _{\mathscr{H}_{K}\times L^{2}\left(\mu\right)}^{2}\leq\left\Vert \varphi\right\Vert _{L^{2}\left(\mu\right)}^{2}.\label{eq:d12}
\end{equation}
\end{lem}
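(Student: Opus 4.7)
The plan is to expand $\|W_K f^{\varphi,K}\|^2$ using the definition of the inner product on $\mathscr{H}_K \times L^2(\mu)$ given in (\ref{eq:d1}), substitute the explicit formula (\ref{eq:d9}) for $f^{\varphi,K}$, and then recognize the resulting bounded Borel function of the selfadjoint operator $T_{K,\mu}T_{K,\mu}^*$. Writing $T:=T_{K,\mu}$ for brevity, the identity $\|W_K h\|^2 = \alpha \|h\|_{\mathscr{H}_K}^2 + \|Th\|_{L^2(\mu)}^2$ reduces the lemma to evaluating two quadratic forms in $\varphi$.

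First I would compute each summand separately. For the $\mathscr{H}_K$ part, the calculation already appearing in the proof of \lemref{c1} gives
\begin{equation*}
\alpha \|f^{\varphi,K}\|_{\mathscr{H}_K}^2 = \bigl\langle \varphi,\, \alpha\, TT^*(\alpha I + TT^*)^{-2}\varphi \bigr\rangle_{L^2(\mu)}.
\end{equation*}
For the $L^2(\mu)$ part, since $Tf^{\varphi,K} = TT^*(\alpha I + TT^*)^{-1}\varphi$, one obtains
\begin{equation*}
\|Tf^{\varphi,K}\|_{L^2(\mu)}^2 = \bigl\langle \varphi,\, (TT^*)^2 (\alpha I + TT^*)^{-2}\varphi \bigr\rangle_{L^2(\mu)}.
\end{equation*}
Adding the two and using the elementary operator identity $\alpha\, TT^* + (TT^*)^2 = TT^*(\alpha I + TT^*)$, the $(\alpha I + TT^*)^{-2}$ collapses to $(\alpha I + TT^*)^{-1}$, leaving
\begin{equation*}
\|W_K f^{\varphi,K}\|_{\mathscr{H}_K \times L^2(\mu)}^2 = \bigl\langle \varphi,\, TT^*(\alpha I + TT^*)^{-1}\varphi \bigr\rangle_{L^2(\mu)}.
\end{equation*}

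Next I would apply the spectral theorem to $TT^* = \int_0^\infty x\, Q^{\mu,K}(dx)$, so that the bounded Borel functional calculus yields $TT^*(\alpha I + TT^*)^{-1} = \int \frac{x}{\alpha+x}\, Q^{\mu,K}(dx)$. Taking the inner product with $\varphi$ against itself gives precisely (\ref{eq:d11}). The bound (\ref{eq:d12}) follows because $0 \le \frac{x}{\alpha+x} \le 1$ on $[0,\infty)$ when $\alpha>0$, so the integral is dominated by $\int \|Q^{\mu,K}(dx)\varphi\|^2 = \|\varphi\|_{L^2(\mu)}^2$.

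There is no real obstacle here; the only subtlety is that $T$ may be unbounded, so I would note explicitly that since $(\alpha I + TT^*)^{-1}$ maps $L^2(\mu)$ into $\mathrm{dom}(TT^*) \subset \mathrm{dom}(T^*)$ (because $TT^*$ is selfadjoint and nonnegative, hence $\alpha I + TT^*$ has a bounded inverse with range equal to $\mathrm{dom}(TT^*)$), all the operator products above are legitimate on the full space $L^2(\mu)$, and the spectral calculus step is then routine.
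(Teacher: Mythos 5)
Your proof is correct and follows essentially the same route as the paper: decompose $\left\Vert W_{K}f^{\varphi,K}\right\Vert ^{2}$ into the $\alpha\left\Vert \cdot\right\Vert _{\mathscr{H}_{K}}^{2}$ and $\left\Vert T\,\cdot\right\Vert _{L^{2}\left(\mu\right)}^{2}$ parts, substitute the formula (\ref{eq:d9}), and pass to the spectral measure of $TT^{*}$ (the paper simplifies the scalar functions $\frac{\alpha x+x^{2}}{\left(\alpha+x\right)^{2}}=\frac{x}{\alpha+x}$ inside the spectral integral, while you perform the equivalent collapse at the operator level before invoking the functional calculus). The only minor difference is in (\ref{eq:d12}): you bound $\frac{x}{\alpha+x}\leq1$ under the spectral integral, whereas the paper observes that $W_{K}f^{\varphi,K}$ is the orthogonal projection of $\left(0,\varphi\right)$ onto $ran\left(W_{K}\right)$, hence has norm at most $\left\Vert \varphi\right\Vert _{L^{2}\left(\mu\right)}$; both justifications are immediate and correct.
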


\begin{proof}
Let $T:=T_{K,\mu}$, then 
\begin{align*}
\left\Vert W_{K}f^{\varphi,K}\right\Vert _{\mathscr{H}_{K}\times L^{2}\left(\mu\right)}^{2} & =\alpha\left\Vert f^{\varphi,K}\right\Vert _{\mathscr{H}_{K}}^{2}+\left\Vert Tf^{\varphi,K}\right\Vert _{L^{2}\left(\mu\right)}^{2}\\
 & =\alpha\left\Vert T^{*}\left(\alpha+TT^{*}\right)^{-1}\varphi\right\Vert _{\mathscr{H}_{K}}^{2}+\left\Vert \left(\alpha+TT^{*}\right)^{-1}TT^{*}\varphi\right\Vert _{L^{2}\left(\mu\right)}^{2}\\
 & =\int\left(\frac{\alpha x}{\left(\alpha+x\right)^{2}}+\frac{x^{2}}{\left(\alpha+x\right)^{2}}\right)\left\Vert Q^{\mu,K}\left(dx\right)\varphi\right\Vert _{L^{2}\left(\mu\right)}^{2}\\
 & =\int\frac{x}{\alpha+x}\left\Vert Q^{\mu,K}\left(dx\right)\varphi\right\Vert _{L^{2}\left(\mu\right)}^{2}.
\end{align*}
Note, (\ref{eq:d12}) holds, since $W_{K}f^{\varphi,K}$ is the projection
of $\left(0,\varphi\right)\in\mathscr{H}_{K}\times L^{2}\left(\mu\right)$
onto the range of $W$. 
\end{proof}
Now we state an analog of \thmref{c4}.
\begin{thm}
\label{thm:d4}Fix $\mu$, and let $K\in\mathscr{K}\left(\mu\right)$.
Let $\left\{ e_{i}\right\} _{i\in\mathbb{N}}$ be an ONB in $L^{2}\left(\mu\right)$,
and consider the p.d. kernel
\begin{equation}
K\left(x,y\right)=\sum\lambda_{i}e_{i}\left(x\right)e_{i}\left(y\right)\label{eq:d13}
\end{equation}
with $\lambda_{i}>0$. 

Let $f^{\varphi,K}$ be the optimal solution as in (\ref{eq:d9}).
Then, 
\begin{equation}
\left\Vert W_{K}f^{\varphi,K}\right\Vert _{\mathscr{H}_{K}\times L^{2}\left(\mu\right)}^{2}=\sum\frac{\lambda_{i}}{\alpha+\lambda_{i}}\left|\left\langle \varphi,e_{i}\right\rangle \right|^{2}.\label{eq:d14}
\end{equation}
\end{thm}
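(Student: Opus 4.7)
The plan is to combine the general spectral identity of \lemref{d3} with an explicit diagonalization of $T_{K,\mu}T_{K,\mu}^{*}$ in the orthonormal basis $\{e_{i}\}$ supplied by the Mercer-type expansion (\ref{eq:d13}).

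First I would show that each $e_{i}$ is an eigenvector of $T_{K,\mu}T_{K,\mu}^{*}$ on $L^{2}(\mu)$ with eigenvalue $\lambda_{i}$. By the adjoint formula (\ref{eq:a5}) applied to (\ref{eq:d13}),
\[
(T_{K,\mu}^{*}e_{i})(\,\cdot\,)=\int K(\cdot,y)\,e_{i}(y)\,\mu(dy)=\sum_{j}\lambda_{j}\,e_{j}(\cdot)\,\langle e_{j},e_{i}\rangle_{L^{2}(\mu)}=\lambda_{i}\,e_{i},
\]
regarded as an element of $\mathscr{H}_{K}$ (in which $\|e_{i}\|_{\mathscr{H}_{K}}^{2}=1/\lambda_{i}$). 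Since by (\ref{eq:a4}) the operator $T_{K,\mu}$ acts as the natural inclusion of $\mathscr{H}_{K}$ into $L^{2}(\mu)$, applying $T_{K,\mu}$ then gives $T_{K,\mu}T_{K,\mu}^{*}e_{i}=\lambda_{i}e_{i}$ in $L^{2}(\mu)$. Completeness of $\{e_{i}\}$ in $L^{2}(\mu)$ identifies them as a full set of eigenvectors, so the spectral resolution is purely atomic: $Q^{\mu,K}$ is supported on $\{\lambda_{i}\}$ with $Q^{\mu,K}(\{\lambda\})$ the orthogonal projection onto $\overline{\mathrm{span}}\{e_{j}:\lambda_{j}=\lambda\}$.

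Substituting this resolution into (\ref{eq:d11}) with $g(x)=x/(\alpha+x)$ yields
\[
\|W_{K}f^{\varphi,K}\|_{\mathscr{H}_{K}\times L^{2}(\mu)}^{2}=\int\frac{x}{\alpha+x}\,\|Q^{\mu,K}(dx)\varphi\|_{L^{2}(\mu)}^{2}=\sum_{i}\frac{\lambda_{i}}{\alpha+\lambda_{i}}\,|\langle\varphi,e_{i}\rangle|^{2},
\]
which is exactly (\ref{eq:d14}).

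The main subtlety is justifying the eigenvalue computation and the use of the spectral theorem when $\{\lambda_{i}\}$ is unbounded, so that $T_{K,\mu}$ need not be a bounded operator. This is absorbed into the standing hypothesis $K\in\mathscr{K}(\mu)$: admissibility (\defref{b3}) forces closability of $T_{K,\mu}$, whence $T_{K,\mu}T_{K,\mu}^{*}$ is a nonnegative selfadjoint (possibly unbounded) operator on $L^{2}(\mu)$ to which both the spectral theorem and \lemref{d3} apply in their unbounded form. The eigenvector identity above is first established on the algebraic span of $\{e_{i}\}$, which serves as a core, and completeness then propagates it to the full resolution $Q^{\mu,K}$.
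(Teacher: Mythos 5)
Your proof is correct and follows essentially the same route as the paper: the paper's proof of this theorem simply refers back to the argument for Theorem \ref{thm:c4}, which likewise rests on diagonalizing $T_{K,\mu}T_{K,\mu}^{*}$ in the ONB $\{e_i\}$ (eigenvalues $\lambda_i$) and inserting that spectral resolution into the integral formula of Lemma \ref{lem:d3}. Your explicit treatment of the possibly unbounded case (closability from admissibility, the atomic spectral measure) just spells out details the paper leaves implicit.
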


\begin{proof}
See the proof of \thmref{c4}.
\end{proof}
\begin{cor}
\label{cor:d5}Let $W_{K}$ and $f^{\varphi,K}$ be as in \thmref{d4},
and assume $K$ is bounded. Let $\lambda_{-}=\inf\left\{ \lambda_{i}\right\} $,
$\lambda_{+}=\sup\left\{ \lambda_{i}\right\} $. 
\begin{enumerate}
\item The following hold: 
\begin{equation}
\frac{\lambda_{-}}{\alpha+\lambda_{-}}\left\Vert \varphi\right\Vert _{L^{2}\left(\mu\right)}^{2}\leq\left\Vert W_{K}f^{\varphi,K}\right\Vert _{\mathscr{H}_{K}\times L^{2}\left(\mu\right)}^{2}\leq\frac{\lambda_{+}}{\alpha+\lambda_{+}}\left\Vert \varphi\right\Vert _{L^{2}\left(\mu\right)}^{2}.\label{eq:d15}
\end{equation}
\item Equivalently, the approximation error satisfies 
\begin{equation}
\frac{1}{\alpha+\lambda_{+}}\left\Vert \varphi\right\Vert _{L^{2}\left(\mu\right)}^{2}\leq err\leq\frac{1}{\alpha+\lambda_{-}}\left\Vert \varphi\right\Vert _{L^{2}\left(\mu\right)}^{2}.\label{eq:d16}
\end{equation}
\item By increasing $\lambda_{-}$, $W_{K}f^{\varphi,K}$ approximates $\left(0,\varphi\right)$
in $\mathscr{H}_{K}\times L^{2}\left(\mu\right)$ arbitrarily well. 
\end{enumerate}
\end{cor}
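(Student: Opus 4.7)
The plan is to derive all three assertions directly from the spectral expansion in \thmref{d4}, namely
\[
\left\Vert W_{K}f^{\varphi,K}\right\Vert _{\mathscr{H}_{K}\times L^{2}(\mu)}^{2} = \sum_{i}\frac{\lambda_{i}}{\alpha+\lambda_{i}}\left|\left\langle \varphi,e_{i}\right\rangle \right|^{2},
\]
together with two elementary ingredients: monotonicity of the scalar function $g(x)=x/(\alpha+x)$ on $[0,\infty)$, and Parseval's identity $\sum_{i}\left|\left\langle \varphi,e_{i}\right\rangle \right|^{2}=\left\Vert \varphi\right\Vert _{L^{2}(\mu)}^{2}$ with respect to the ONB $\{e_{i}\}$.

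For part (i), since $g$ is strictly increasing, $g(\lambda_{-})\leq g(\lambda_{i})\leq g(\lambda_{+})$ for every index $i$. Multiplying through by the nonnegative weights $|\langle \varphi,e_{i}\rangle|^{2}$, summing over $i$, and invoking Parseval collapses both one-sided estimates to multiples of $\|\varphi\|_{L^{2}(\mu)}^{2}$, which is exactly (\ref{eq:d15}).

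For part (ii), the crucial observation is that $W_{K}f^{\varphi,K}$ is, by construction, the orthogonal projection of $(0,\varphi)$ onto $\text{ran}(W_{K})$ in the weighted Hilbert space $\mathscr{H}_{K}\times L^{2}(\mu)$; this was the content of \corref{d2}. The Pythagorean identity therefore gives
\[
\mathrm{err} := \left\Vert (0,\varphi)-W_{K}f^{\varphi,K}\right\Vert ^{2} = \left\Vert (0,\varphi)\right\Vert ^{2}-\left\Vert W_{K}f^{\varphi,K}\right\Vert ^{2} = \left\Vert \varphi\right\Vert _{L^{2}(\mu)}^{2}-\left\Vert W_{K}f^{\varphi,K}\right\Vert ^{2},
\]
where we used that $\|(0,\varphi)\|^{2}=\alpha\cdot 0+\|\varphi\|^{2}$ under the weighted inner product (\ref{eq:d1}). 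Substituting the bounds of (i) and simplifying via the algebraic identity $1-\lambda/(\alpha+\lambda)=\alpha/(\alpha+\lambda)$ delivers the two-sided bracket (\ref{eq:d16}).

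Part (iii) is then immediate from the upper bound in (ii): as $\lambda_{-}\to\infty$, the prefactor multiplying $\|\varphi\|_{L^{2}(\mu)}^{2}$ tends to $0$, which forces $\mathrm{err}\to 0$. The hypothesis that $K$ is bounded is invoked only to ensure that $T_{K,\mu}T_{K,\mu}^{*}$ has bounded spectrum, so that the spectral sum converges unconditionally and the Mercer-type expansion used in \thmref{d4} is valid. No serious obstacle arises; the whole corollary is a direct distillation of the spectral identity together with the projection/Pythagorean observation. If anything, the only point that requires care is cleanly translating between $\left\Vert W_{K}f^{\varphi,K}\right\Vert ^{2}$ and $\mathrm{err}$ via Pythagoras before inserting the uniform spectral bounds.
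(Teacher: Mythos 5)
Your argument follows essentially the same route as the paper: part (i) is exactly the paper's observation that $x\mapsto x/(\alpha+x)$ is increasing on $(0,\infty)$, applied to the spectral sum (\ref{eq:d14}) together with Parseval, and parts (ii)--(iii) are the ``immediate'' consequences, which you correctly ground in the projection/Pythagoras identity coming from \corref{d2} and \lemref{d3}.

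One constant needs care in part (ii). Pythagoras gives $\mathrm{err}=\left\Vert \varphi\right\Vert _{L^{2}\left(\mu\right)}^{2}-\left\Vert W_{K}f^{\varphi,K}\right\Vert _{\mathscr{H}_{K}\times L^{2}\left(\mu\right)}^{2}$, and inserting the bounds of (i) via $1-\lambda/(\alpha+\lambda)=\alpha/(\alpha+\lambda)$ yields
\[
\frac{\alpha}{\alpha+\lambda_{+}}\left\Vert \varphi\right\Vert _{L^{2}\left(\mu\right)}^{2}\;\leq\;\mathrm{err}\;\leq\;\frac{\alpha}{\alpha+\lambda_{-}}\left\Vert \varphi\right\Vert _{L^{2}\left(\mu\right)}^{2},
\]
that is, bounds with $\alpha$ in the numerator rather than $1$ as displayed in (\ref{eq:d16}); equivalently, a direct spectral computation gives $\mathrm{err}=\sum_{i}\frac{\alpha}{\alpha+\lambda_{i}}\left|\left\langle \varphi,e_{i}\right\rangle \right|^{2}$. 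So your claim that the Pythagorean step ``delivers the two-sided bracket (\ref{eq:d16})'' is not literally accurate: it matches (\ref{eq:d16}) only if $\alpha=1$, or if $\mathrm{err}$ is understood as the squared distance normalized by $\alpha$ (the paper never defines $\mathrm{err}$, and its stated constants appear to omit the factor $\alpha$). You should either state the bound with $\alpha/(\alpha+\lambda_{\pm})$ or say explicitly how $\mathrm{err}$ is normalized. Since $\alpha$ is a fixed positive constant, this does not affect part (iii): the upper bound still tends to $0$ as $\lambda_{-}\to\infty$, so your conclusion there stands.
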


\begin{proof}
Notice that the function $f\left(x\right)=\frac{x}{\alpha+x}$ in
(\ref{eq:d11}) is strictly increasing in $(0,\infty)$, so that (\ref{eq:d15})
follows from (\ref{eq:d14}). The other assertions are immediate.
\end{proof}
\begin{rem}
The difference between the two feature selection methods in Sections
\ref{sec:OS} and \ref{sec:OA} is as follows. 

Fix a measure $\mu$ , and consider $K\in\mathscr{K}\left(\mu\right)$,
i.e., all admissible kernels. Let $\mathscr{H}_{K}$ be the associated
RKHS. In both cases, for a given $\varphi\in L^{2}\left(\mu\right)$,
the best feature vector in $\mathscr{H}_{K}$ is the same 
\[
f^{K,\varphi}=T_{K}^{*}\left(\alpha+T_{K}T_{K}^{*}\right)^{-1}\varphi.
\]
See (\ref{eq:a2}) and (\ref{eq:d9}). 

However, the criteria for optimization over kernels $K$ is different:
\begin{alignat}{2}
\text{Section \ref{sec:OS}} & \quad &  & \max_{K\in\mathscr{K}\left(\mu\right)}\left\{ \left\Vert f^{K,\varphi}\right\Vert _{\mathscr{H}_{K}}^{2}\right\} \label{eq:D11}\\
\text{Section \ref{sec:OA}} &  &  & \max_{K\in\mathscr{K}\left(\mu\right)}\left\{ \alpha\left\Vert f^{K,\varphi}\right\Vert _{\mathscr{H}_{K}}^{2}+\left\Vert T_{K}f^{K,\varphi}\right\Vert _{L^{2}}^{2}\right\} \label{eq:D12}
\end{alignat}
As discussed at the beginning of (\ref{sec:OA}), the vector 
\[
\left(f^{K,\varphi},T_{K}f^{K,\varphi}\right)
\]
is the projection of $\left(0,\varphi\right)$ in $\mathscr{H}_{K}\times L^{2}$
onto the graph of $T_{K}$. Thus, (\ref{eq:D12}) is the norm squared
of the projected vector and the corresponding optimization makes use
of Hilbert space geometry. 
\end{rem}

\section{\label{sec:CME}Applications to CME}

A key feature in what is called \emph{conditional mean embedding}
(CME) concern an analysis of systems of random variable, and conditional
distributions, which take values in suitable choices of reproducing
kernel Hilbert space, typically infinite-dimensional RKHSs. Hence
conditional expectations, and relative transition operators, will
entail choices of p.d. kernels, typical one for each random variable
under consideration. The implementation of kernel embedding of distributions
(also called the \emph{kernel mean} or mean map) yields nonparametric
outcomes in which a probability distribution is represented as an
element of a reproducing kernel Hilbert space (RKHS). In diverse applications,
the use of CMEs has served as useful tools in for example, problems
of sequentially optimizing conditional expectations for objective
functions. In such settings, typically both the conditional distribution
and the objective function, while fixed, are assumed to be unknown.

The assumption is that input is fixed in the form of a pair $\varphi$
(generalized training data) and $\mu$ as described. The interpretation
for feature selection is that variation of choices of p.d. kernels
$K$ amounts to a more versatile feature selection. A possible condition
on a \textquotedblleft good\textquotedblright{} kernel $K$ is that
it will yield optimal selection of feature function $f^{K}$, i.e.,
representing an output of more distinct features. Often a feature
functions $f^{K}$ with large $\mathscr{H}_{K}$-norm\textsuperscript{2}
comes from a choice of $K$ that yields a more successful discrimination
by features that reflects input of training data via $\varphi$. More
precisely, the $\mathscr{H}_{K}$-norm\textsuperscript{2} refers
to the features selected with optimal choices of $K$.

The setting for CME is as follows: 

Let $X,Y$ be random variables on a probability space $\left(\Omega,\mathscr{C},\mathbb{P}\right)$,
taking values in sets $A,B$, respectively, and has joint measure
\[
\mu\left(S_{1}\times S_{1}\right)=\mathbb{P}\left(X^{-1}\left(S_{1}\right)\cap Y^{-1}\left(S_{2}\right)\right)
\]
for all $S_{1}\times S_{2}\subset A\times B$.

Denote by $\mu_{X},\mu_{Y}$ the corresponding marginal measures,
and let $\mu_{Y\mid x}$ be the conditional measure defined as 
\[
\mu_{Y\mid x}\left(S\right)=\mathbb{P}\left(Y^{-1}\left(S\right)\mid X=x\right)
\]
for all $S\subset B$ and $x\in A$.

Assume further that $K,L$ are given p.d. kernels on $A$, $B$ with
RKHSs $\mathscr{H}_{K}$, $\mathscr{H}_{L}$, respectively. 
\begin{lem}
\label{lem:f1}For every $x\in A$, set 
\begin{equation}
\pi\left(x\right):=\mathbb{E}\left[L\left(\cdot,Y\right)\mid X=x\right]=\int L\left(\cdot,y\right)d\mu_{Y\mid x}\left(y\right)\label{eq:f1}
\end{equation}
Then, for all $f\in\mathscr{H}_{L}$, it holds that 
\begin{equation}
\left\langle \pi\left(x\right),f\right\rangle _{\mathscr{H}_{L}}=\int\left\langle L\left(\cdot,y\right),f\right\rangle d\mu_{Y\mid x}\left(y\right)=\mathbb{E}\left[f\left(Y\right)\mid X=x\right].
\end{equation}
The map $x\mapsto\pi\left(x\right)$ is called the kernel mean embedding
(KME) of the conditional expectation $\mathbb{E}\left[\;\cdot\mid X=x\right]$.
\end{lem}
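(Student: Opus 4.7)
The plan is to show the identity by moving the Hilbert space inner product inside the Bochner integral defining $\pi(x)$, then applying the reproducing property of $\mathscr{H}_L$, and finally recognizing the resulting scalar integral as a conditional expectation. So the argument has three elementary steps strung together, and the only substantive issue is justifying the exchange of inner product and integral.

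First I would address well-definedness of $\pi(x)$ as an element of $\mathscr{H}_L$. The integrand $y \mapsto L(\cdot,y)$ is an $\mathscr{H}_L$-valued map, and one checks the standard Bochner integrability criterion $\int \|L(\cdot,y)\|_{\mathscr{H}_L}\,d\mu_{Y\mid x}(y) = \int \sqrt{L(y,y)}\,d\mu_{Y\mid x}(y) < \infty$, which is the natural regularity assumption in the CME setup and should be stated (or assumed tacitly) alongside admissibility of $L$. Granted this, $\pi(x) \in \mathscr{H}_L$ and the Bochner integral commutes with any bounded linear functional, in particular with $\langle\,\cdot\,,f\rangle_{\mathscr{H}_L}$ for fixed $f \in \mathscr{H}_L$.

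Next I would compute:
\begin{align*}
\langle \pi(x), f\rangle_{\mathscr{H}_L}
&= \Bigl\langle \int L(\cdot,y)\,d\mu_{Y\mid x}(y),\, f\Bigr\rangle_{\mathscr{H}_L} \\
&= \int \langle L(\cdot,y),\, f\rangle_{\mathscr{H}_L}\,d\mu_{Y\mid x}(y) \\
&= \int f(y)\,d\mu_{Y\mid x}(y),
\end{align*}
where the second equality is the commutation just discussed and the third is the reproducing property $\langle L(\cdot,y),f\rangle_{\mathscr{H}_L} = f(y)$ for $f \in \mathscr{H}_L$.

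Finally, by the definition of $\mu_{Y\mid x}$ as the conditional distribution of $Y$ given $X=x$, the last expression is exactly $\mathbb{E}[f(Y)\mid X=x]$, yielding the claimed identity. The main (and really only) obstacle is the Bochner integrability step; everything else is formal manipulation once that is secured. If one prefers to avoid vector-valued integration, an alternative route is to \emph{define} $\pi(x)$ via Riesz representation as the unique vector in $\mathscr{H}_L$ satisfying $\langle\pi(x),f\rangle_{\mathscr{H}_L} = \mathbb{E}[f(Y)\mid X=x]$ (after verifying that $f \mapsto \mathbb{E}[f(Y)\mid X=x]$ is a bounded linear functional on $\mathscr{H}_L$ under the same integrability assumption), and then recover the integral formula \eqref{eq:f1} by testing against the reproducing kernels $f = L(\cdot,y')$.
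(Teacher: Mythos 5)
Your argument is correct, and the chain inner product $\to$ reproducing property $\to$ conditional expectation is exactly the formal computation the lemma rests on; the difference lies in how the vector-valued integral in \eqref{eq:f1} is legitimized. The paper does not give a self-contained proof here: a remark points back to the adjoint computation of \lemref{b5} and to \secref{OW}, and the actual justification of $\pi(x)\in\mathscr{H}_{L}$ appears in the \emph{next} lemma and its remark, where the integral is constructed as a limit, in $\mathscr{H}_{L}$-norm, of finite sums $\sum_{i}L(\cdot,y_{i})\mu_{Y\mid x}(A_{i})$ indexed by the filter of finite measurable partitions of $B$, convergent precisely when $\iint L(y_{1},y_{2})\,d\mu_{Y\mid x}(y_{1})\,d\mu_{Y\mid x}(y_{2})<\infty$ (condition \eqref{eq:f3}), in analogy with the It\^{o}-isometry construction. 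You instead invoke Bochner integration under $\int\sqrt{L(y,y)}\,d\mu_{Y\mid x}(y)<\infty$. That is a clean and standard route, and once Bochner integrability holds, commuting $\langle\,\cdot\,,f\rangle_{\mathscr{H}_{L}}$ with the integral is immediate; but note it is a strictly stronger hypothesis than the paper's, since $|L(y_{1},y_{2})|\leq\sqrt{L(y_{1},y_{1})L(y_{2},y_{2})}$ shows the Bochner condition implies \eqref{eq:f3} and not conversely, and Bochner integration also needs strong (essentially separably-valued) measurability of $y\mapsto L(\cdot,y)$, which the partition-filter construction avoids. Your alternative via Riesz representation --- defining $\pi(x)$ as the representer of $f\mapsto\mathbb{E}[f(Y)\mid X=x]$ and testing against $L(\cdot,y')$ --- is in fact closer in spirit to the paper's construction, and under the weaker condition \eqref{eq:f3} it is the route that matches the paper's necessary-and-sufficient characterization of when $\pi(x)\in\mathscr{H}_{L}$.
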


\begin{rem}
Note that the integral on the RHS in formula (\ref{eq:f1}) is an
extension of (\ref{eq:a5}) from \lemref{b5} above. Moreover, the
proof of the lemma follows the ideas in \secref{OW}.
\end{rem}

\begin{lem}
As in \lemref{f1}, consider $\pi\left(x\right)$ for $x\in A$, as
per the definition (\ref{eq:f1}) in \lemref{f1}. Then
\begin{enumerate}
\item $\pi\left(x\right)\in\mathscr{H}_{L}$ if and only if 
\begin{equation}
\mathbb{E}\left[L\left(Y,Y\right)\mid X=x\right]:=\mu_{Y\mid x}L\mu_{Y\mid x}<\infty,\label{eq:f3}
\end{equation}
where
\[
\mu_{Y\mid x}L\mu_{Y\mid x}=\iint L\left(y_{1},y_{2}\right)d\mu_{Y\mid x}\left(y_{1}\right)d\mu_{Y\mid x}\left(y_{2}\right).
\]
\item $\pi\in L^{2}\left(A,\mu_{X}\right)\otimes\mathscr{H}_{L}$ if and
only if 
\[
\mathbb{E}\left[K\left(Y,Y\right)\right]=\int\left(\mu_{Y\mid x}L\mu_{Y\mid x}\right)d\mu_{X}\left(x\right)<\infty.
\]
In that case, setting $\widetilde{\pi}\left(f\right)\left(x\right):=\left\langle \pi\left(x\right),f\right\rangle _{\mathscr{H}_{L}}$,
then 
\[
\int\left\Vert \widetilde{\pi}\left(f\right)\right\Vert _{\mathscr{H}_{K}}^{2}d\mu_{X}\leq\mathbb{E}\left[K\left(Y,Y\right)\right]\left\Vert f\right\Vert _{\mathscr{H}_{L}}^{2}
\]
for all $f\in\mathscr{H}_{L}$.
\end{enumerate}
\end{lem}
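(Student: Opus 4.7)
The strategy is to reduce both (i) and (ii) to a single norm identity, namely
\[
\|\pi(x)\|_{\mathscr{H}_L}^{2} \;=\; \iint L(y_1,y_2)\,d\mu_{Y\mid x}(y_1)\,d\mu_{Y\mid x}(y_2) \;=\; \mu_{Y\mid x}L\mu_{Y\mid x}.
\]
Once this identity is in hand, (i) is immediate, and (ii) follows by integrating against $\mu_X$ and invoking Cauchy--Schwarz.

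For (i), the identity above is formally the bilinear expansion of $\|\cdot\|_{\mathscr{H}_L}^{2}$ combined with the reproducing property $\langle L(\cdot,y_1), L(\cdot,y_2)\rangle_{\mathscr{H}_L} = L(y_1,y_2)$. To make this rigorous, I invoke the characterization lemma immediately following \lemref{b5} above, applied with substitutions $K \leftrightarrow L$, $\mu \leftrightarrow \mu_{Y\mid x}$, and $\varphi \equiv 1$. That lemma states that $F_{1} = \int L(\cdot,y)\,d\mu_{Y\mid x}(y) = \pi(x)$ belongs to $\mathscr{H}_L$ if and only if there exists $C<\infty$ with
\[
\Bigl|\sum_{i}\alpha_{i}\int L(y_{i},y)\,d\mu_{Y\mid x}(y)\Bigr|^{2} \;\le\; C\sum_{i,j}\overline{\alpha_{i}}\alpha_{j}L(y_{i},y_{j})
\]
for every finite family $(\alpha_{i},y_{i})$, the optimal constant being $\|\pi(x)\|_{\mathscr{H}_L}^{2}$. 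Computing this optimal constant by Cauchy--Schwarz in $\mathscr{H}_L$ produces exactly $\iint L(y_1,y_2)\,d\mu_{Y\mid x}(y_1)\,d\mu_{Y\mid x}(y_2)$, establishing the norm identity. Hence $\pi(x)\in \mathscr{H}_L$ if and only if (\ref{eq:f3}) is finite.

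For (ii), integrating the norm identity over $\mu_X$ gives
\[
\int_A \|\pi(x)\|_{\mathscr{H}_L}^{2}\,d\mu_X(x) \;=\; \int_A (\mu_{Y\mid x}L\mu_{Y\mid x})\,d\mu_X(x),
\]
which is the quantity the paper denotes $\mathbb{E}[L(Y,Y)]$ (the $K$ in the displayed hypothesis appears to be a typo for $L$, since $Y$ takes values in $B$, where $L$ is defined). Equivalence of finiteness is then tautological. For the stated bound, fix $f\in\mathscr{H}_L$ and apply Cauchy--Schwarz pointwise in $x$:
\[
|\widetilde{\pi}(f)(x)|^{2} \;=\; |\langle\pi(x),f\rangle_{\mathscr{H}_L}|^{2} \;\le\; \|\pi(x)\|_{\mathscr{H}_L}^{2}\|f\|_{\mathscr{H}_L}^{2};
\]
integrating against $\mu_X$ and using the norm identity yields the claim.

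The main obstacle is the rigorous interchange of integration and inner product underlying the norm identity. The characterization lemma cited above supplies precisely the density/continuity step required, so once it is invoked the remainder of the argument is routine; a more hands-on Bochner-integral approach would demand the strictly stronger hypothesis $\int\sqrt{L(y,y)}\,d\mu_{Y\mid x}(y) < \infty$ and hence cannot deliver the sharp equivalence asserted in (i).
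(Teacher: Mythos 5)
Your reduction of both parts to the single norm identity $\left\Vert \pi\left(x\right)\right\Vert _{\mathscr{H}_{L}}^{2}=\iint L\left(y_{1},y_{2}\right)d\mu_{Y\mid x}\left(y_{1}\right)d\mu_{Y\mid x}\left(y_{2}\right)$ is the right organizing idea, and part (ii) is indeed routine once that identity is in hand (the $K$/$L$ mismatch is a typo, as you note). But the identity is precisely the content of the lemma, and your justification does not deliver it. The technical lemma following \lemref{b5} (applied with $L$, $\mu_{Y\mid x}$, $\varphi\equiv1$) only says: $\pi\left(x\right)\in\mathscr{H}_{L}$ iff \emph{some} finite constant $C$ makes the quadratic bound hold, and Cauchy--Schwarz together with density of kernel sections identifies the optimal such $C$ as $\left\Vert \pi\left(x\right)\right\Vert _{\mathscr{H}_{L}}^{2}$. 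Nothing there, and nothing in Cauchy--Schwarz, identifies that optimal constant with $\mu_{Y\mid x}L\mu_{Y\mid x}$. To do so you must interchange the $\mathscr{H}_{L}$ inner product with the integral in (\ref{eq:f1}): the pointwise definition gives $\left\langle L\left(\cdot,z\right),\pi\left(x\right)\right\rangle _{\mathscr{H}_{L}}=\int L\left(z,y\right)d\mu_{Y\mid x}\left(y\right)$ only for kernel sections, and computing $\left\langle \pi\left(x\right),\pi\left(x\right)\right\rangle $ requires the same identity with $\pi\left(x\right)$ itself in the first slot; passing from finite kernel combinations $g_{n}\rightarrow\pi\left(x\right)$ in $\mathscr{H}_{L}$ to the limit under the $\mu_{Y\mid x}$-integral is not automatic (norm convergence gives pointwise, not $L^{1}\left(\mu_{Y\mid x}\right)$, convergence). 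Concretely, in the direction $\pi\left(x\right)\in\mathscr{H}_{L}\Rightarrow$ (\ref{eq:f3}) you still have to show $\iint L\,d\mu_{Y\mid x}d\mu_{Y\mid x}\leq\left\Vert \pi\left(x\right)\right\Vert ^{2}$ by passing discretized double sums to the limit; and in the converse direction the required estimate $\bigl|\sum_{i}\alpha_{i}\int L\left(y_{i},y\right)d\mu_{Y\mid x}\left(y\right)\bigr|^{2}\leq\left(\mu_{Y\mid x}L\mu_{Y\mid x}\right)\sum_{i,j}\overline{\alpha_{i}}\alpha_{j}L\left(y_{i},y_{j}\right)$ cannot be extracted from Cauchy--Schwarz in $\mathscr{H}_{L}$ without already knowing $\pi\left(x\right)\in\mathscr{H}_{L}$ with the asserted norm --- that is circular.

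The missing step is exactly what the paper's proof supplies: approximate along the filter of finite measurable partitions $\left\{ A_{i}\right\} $ of $B$ with tags $y_{i}$, observe that $s=\sum_{i}L\left(\cdot,y_{i}\right)\mu_{Y\mid x}\left(A_{i}\right)$ lies in $\mathscr{H}_{L}$ with the \emph{exact} isometric identity (\ref{eq:f5}) $\left\Vert s\right\Vert ^{2}=\sum_{i,j}\mu_{Y\mid x}\left(A_{i}\right)L\left(y_{i},y_{j}\right)\mu_{Y\mid x}\left(A_{j}\right)$, and then show (an It\^o-isometry-type argument under refinement, e.g.\ that $\left\langle s_{P},s_{P'}\right\rangle $ for $P'$ refining $P$ also tends to the double integral, so $\left\Vert s_{P}-s_{P'}\right\Vert ^{2}\rightarrow0$) that the net converges in $\mathscr{H}_{L}$ precisely when (\ref{eq:f3}) holds, the limit being $\pi\left(x\right)$ and the norm identity following from the finite-sum isometry. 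You correctly diagnose this interchange as ``the main obstacle,'' but the characterization lemma you invoke does not supply it; either reproduce the refinement/partition limit (or an equivalent rigorous interchange argument), or your phrase ``computing this optimal constant by Cauchy--Schwarz'' conceals the entire analytic content of part (i). Your closing remark that a Bochner-integral argument would need the stronger hypothesis $\int\sqrt{L\left(y,y\right)}\,d\mu_{Y\mid x}\left(y\right)<\infty$ is a fair observation and explains why the weak, partition-based route is the one the paper takes.
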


\begin{proof}
Consider the filter of finite measurable partitions $\mathscr{P}\left(B\right)$
of the measurable space $\left(B,\mathscr{B}\right)$, i.e., $\left\{ A_{i}\right\} _{i=1}^{N}$
for some $N<\infty$, with $A_{i}\in\mathscr{B}$, $A_{i}\cap A_{j}=\emptyset$
if $i\neq j$, and $\cup_{i}A_{i}=B$, then 
\begin{equation}
\sum_{i=1}^{N}L\left(\cdot,y_{i}\right)\mu_{Y\mid x}\left(A_{i}\right)\in\mathscr{H}_{K}\label{eq:f4}
\end{equation}
with 
\begin{equation}
\left\Vert \sum\nolimits _{i=1}^{N}L\left(\cdot,y_{i}\right)\mu_{Y\mid x}\left(A_{i}\right)\right\Vert _{\mathscr{H}_{K}}^{2}=\sum\nolimits _{i}\sum\nolimits _{j}\mu_{Y\mid x}\left(A_{i}\right)L\left(y_{i},y_{j}\right)\mu_{Y\mid x}\left(A_{j}\right).\label{eq:f5}
\end{equation}
Since $L$ is assumed measurable, the right-hand side of (\ref{eq:f5})
has a limit, as we pass to the limit of the filter of all measurable
partitions $\mathscr{P}\left(B\right)$, see (\ref{eq:f4}), and the
limit is well defined and finite if and only if (\ref{eq:f3}) holds.
This follows from the following computation:
\[
\iint\mu_{Y\mid x}\left(dy_{1}\right)L\left(y_{1},y_{2}\right)\mu_{Y\mid x}\left(dy_{2}\right)=\mathbb{E}\left[L\left(Y,Y\right)\mid X=x\right].
\]
But since we have ``$=$'' in the identity (\ref{eq:f5}) for all
finite partitions, it follows that (\ref{eq:f3}) holds if and only
if the integral on the right-hand side in (\ref{eq:f1}) is convergent
with its values in $\mathscr{H}_{L}$. 

The second part of the lemma is immediate. 
\end{proof}
\begin{rem}
The setting of the lemma is a fixed a p.d. kernel $L$ and a measure
space $\left(B,\mathscr{B}\right)$. We have $L$ defined on $B\times B$
and assumed measurable w.r.t. the corresponding product sigma algebra.
The key idea behind the justification of the RKHS $\mathscr{H}_{L}$
valued integral $\pi(x)$ in (\ref{eq:f1}) is a rigorous justification
of a limit of an approximation by finite sums in $\mathscr{H}_{L}$,
and the limit with respect to the RKHS norm in $\mathscr{H}_{L}$.
This is doable as per our discussion, but the limit will be indexed
by a filter of partitions of the measure space $\left(B,\mathscr{B}\right)$.
And the limit is with respect to refinement within the filter of partitions,
where refinement defined by recursive subdivision, i.e., subdivisions
of one partition are creating a finer partition. Note that the reasoning
involves the same kind of limit which is used in the justification
of general Ito isometries, and Ito integrals for Gaussian processes. 
\end{rem}

\begin{question}
Assume $\pi\in L^{2}\left(A,\mu_{X}\right)\otimes\mathscr{H}_{L}$.
What is the best approximation to choice of CME $\mu$ from an $\mathscr{H}_{L}$-valued
RKHS? 
\end{question}

One option in the literature is to approximate $\pi$ from $\mathscr{H}_{K}\otimes\mathscr{H}_{L}$.
More generally, one may start from an $\mathscr{B}\left(\mathscr{H}_{L}\right)$-valued
p.d. kernel $S:A\times A\rightarrow\mathscr{B}\left(\mathscr{H}_{L}\right)$,
i.e., 

\begin{equation}
\sum_{i,j=1}^{N}\left\langle u_{i},S\left(x_{i},x_{j}\right)u_{j}\right\rangle _{\mathscr{H}_{L}}\geq0
\end{equation}
$\forall\left(x_{i}\right)_{i=1}^{N}\subset A$, $\forall\left(u_{i}\right)_{i=1}^{N}\subset\mathscr{H}_{L}$,
and $\forall N\in\mathbb{N}$. 

Let $\mathscr{H}_{S}$ be the Hilbert completion of the set $span\left\{ S\left(\cdot,x\right)u:x\in A,u\in\mathscr{H}_{L}\right\} $
with respect to the inner product 
\begin{equation}
\left\langle \sum S\left(\cdot,x_{i}\right)u_{i},\sum S\left(\cdot,x_{j}\right)v_{j}\right\rangle _{\mathscr{H}_{S}}:=\sum_{i,j}\left\langle u_{i},S\left(x_{i},x_{j}\right)v_{j}\right\rangle _{\mathscr{H}_{L}}.
\end{equation}
Then $\mathscr{H}_{S}$ is an RKHS with the following reproducing
property:

For all $F\in\mathscr{H}_{S}$, $x\in A$ and $u\in\mathscr{H}_{L}$,
we have
\begin{equation}
\left\langle u,F\left(x\right)\right\rangle _{\mathscr{H}_{L}}=\left\langle S\left(\cdot,x\right)u,F\right\rangle _{\mathscr{H}_{L}}.
\end{equation}

\begin{rem}
In the special case $\mathscr{H}_{S}=\mathscr{H}_{K}\otimes\mathscr{H}_{L}$,
we have $S\left(x,y\right)=K\left(x,y\right)I_{\mathscr{H}_{L}}$,
where $K$ is the scalar valued p.d. kernel of $\mathscr{H}_{K}$,
and $I_{\mathscr{H}_{L}}$ denotes the identity operator on $\mathscr{H}_{L}$. 
\end{rem}

\begin{thm}
\label{thm:f7}Assume $S$ is compatible with the marginal distribution
of $X$, then we have
\begin{align}
f^{\pi,S} & :=\mathop{\text{argmin}}\left\{ \left\Vert T_{S}f-\pi\right\Vert _{L^{2}\left(A\right)\otimes\mathscr{H}_{L}}^{2}+\alpha\left\Vert f\right\Vert _{\mathscr{H}_{S}}^{2}\right\} \\
 & =T_{S}^{*}\left(\alpha+T_{S}T_{S}^{*}\right)^{-1}\pi.\label{eq:d7}
\end{align}

Then, we may apply the methods from \secref{OS} to the problem: 
\begin{equation}
\max_{S}\left\Vert f^{\pi,S}\right\Vert _{\mathscr{H}_{S}}^{2}.
\end{equation}
\end{thm}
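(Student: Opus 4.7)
The plan is to reduce the statement to the abstract Hilbert-space regularization result of \corref{d2}, applied to the operator $T_{S}:\mathscr{H}_{S}\to L^{2}(A,\mu_{X})\otimes\mathscr{H}_{L}$ and the target vector $\pi$. I read the hypothesis ``$S$ is compatible with the marginal distribution of $X$'' as the operator-valued analogue of \defref{b3}: namely, that the linear extension of $S(\cdot,x)u\mapsto S(\cdot,x)u$, from the dense span $\mathrm{span}\{S(\cdot,x)u:x\in A,\,u\in\mathscr{H}_{L}\}\subset\mathscr{H}_{S}$ into $L^{2}(A,\mu_{X})\otimes\mathscr{H}_{L}$, is well defined and closable. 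Under this hypothesis $T_{S}$ is a closed, densely defined operator between Hilbert spaces, and all the operator-theoretic machinery of Sections \ref{sec:OW}--\ref{sec:OA} carries over verbatim.

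First I would identify the adjoint $T_{S}^{*}$ by the operator-valued analogue of \lemref{b5}: for $F\in L^{2}(A,\mu_{X})\otimes\mathscr{H}_{L}$, the reproducing property of $\mathscr{H}_{S}$ yields
\begin{equation}
(T_{S}^{*}F)(x)=\int_{A}S(x,x')F(x')\,\mu_{X}(dx'),
\end{equation}
which is the operator-valued counterpart of (\ref{eq:a5}). The operator $T_{S}T_{S}^{*}$ is then nonnegative and self-adjoint on $L^{2}(A,\mu_{X})\otimes\mathscr{H}_{L}$, so $\alpha I+T_{S}T_{S}^{*}$ is boundedly invertible for every $\alpha>0$, and the right-hand side of (\ref{eq:d7}) is unambiguously defined. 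The identification $f^{\pi,S}=T_{S}^{*}(\alpha+T_{S}T_{S}^{*})^{-1}\pi$ is then immediate from \corref{d2}, applied with $\mathscr{H}_{1}=\mathscr{H}_{S}$, $\mathscr{H}_{2}=L^{2}(A,\mu_{X})\otimes\mathscr{H}_{L}$, $T=T_{S}$, and $v=\pi$: the pair $(f^{\pi,S},T_{S}f^{\pi,S})$ is the orthogonal projection of $(0,\pi)$ onto the graph of $T_{S}$ with the $\alpha$-weighted inner product of (\ref{eq:d1}), and the block-matrix formula (\ref{eq:d2}) delivers (\ref{eq:d7}).

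For the secondary optimization $\max_{S}\|f^{\pi,S}\|_{\mathscr{H}_{S}}^{2}$, I would transport the spectral calculation of \lemref{c1} to the present setting: letting $Q^{\pi,S}$ denote the spectral measure of $T_{S}T_{S}^{*}$, one gets
\begin{equation}
\|f^{\pi,S}\|_{\mathscr{H}_{S}}^{2}=\int_{0}^{\infty}\frac{x}{(\alpha+x)^{2}}\,\|Q^{\pi,S}(dx)\pi\|_{L^{2}\otimes\mathscr{H}_{L}}^{2},
\end{equation}
and a Cauchy--Schwarz / Lagrange-multiplier argument of the type used in \thmref{c4} then produces an optimality condition on the spectral data of $T_{S}T_{S}^{*}$ analogous to (\ref{eq:c9}).

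The main obstacle, in my view, is not the derivation of (\ref{eq:d7})---which is a clean consequence of Hilbert-space projection once $T_{S}$ is closable---but pinning down the correct convex set of $\mathscr{B}(\mathscr{H}_{L})$-valued kernels $S$ over which to optimize, together with a normalization (a trace-class condition of the form $\mathrm{tr}\bigl(T_{S}T_{S}^{*}\bigr)=1$ being the natural analogue of $\sum\lambda_{i}=1$ in \thmref{c4}) that keeps $\|f^{\pi,S}\|_{\mathscr{H}_{S}}^{2}$ bounded and makes the Lagrange problem well posed. Once this framework is fixed, a Mercer-type diagonalization of $S$ relative to an ONB of $L^{2}(A,\mu_{X})\otimes\mathscr{H}_{L}$ reduces the per-eigenvalue optimization to the same quadratic analyzed in \thmref{c4}.
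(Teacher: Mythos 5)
Your proposal is correct and follows essentially the route the paper intends: the paper's own ``proof'' simply defers to the general framework of Sections \ref{sec:OW} and \ref{sec:OS} (i.e., the regularized least-squares/projection argument behind \lemref{B2} and \corref{d2}), which is exactly what you carry out with $\mathscr{H}_{1}=\mathscr{H}_{S}$, $\mathscr{H}_{2}=L^{2}(A,\mu_{X})\otimes\mathscr{H}_{L}$, $T=T_{S}$, $v=\pi$. In fact you supply the details the paper omits---the reading of ``compatible'' as closability of $T_{S}$, the operator-valued analogue of \lemref{b5} for $T_{S}^{*}$, and the spectral formula underlying the secondary optimization $\max_{S}\left\Vert f^{\pi,S}\right\Vert _{\mathscr{H}_{S}}^{2}$---so nothing further is needed.
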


\begin{proof}
Illustrating the versatility of Hilbert space operators, the reader
will be able to fill in the argument for this formula (\ref{eq:d7}),
and its implications, following the general framework presented in
sections \ref{sec:OW} and \ref{sec:OS} above.
\end{proof}

\section{A new convex set of p.d. kernels}

Let $\mathcal{M}\left(\mathbb{R}\right)$ be the set of all Borel
measures on $\mathbb{R}$, and $\mathcal{M}_{1}\left(\mathbb{R}\right)$
be the subset of probability measures. For all $\rho\in\mathcal{M}\left(\mathbb{R}\right)$,
let 
\begin{equation}
\widehat{\rho}\left(\xi\right)=\int_{\mathbb{R}}e^{i\xi x}d\rho\left(x\right)
\end{equation}
denote the Fourier transform. 

Consider the following convex set of stationary kernels 
\begin{equation}
G_{1}=\left\{ \mathbb{R}\times\mathbb{R}\xrightarrow{\;K_{g}\;}\mathbb{C}:K_{g}\left(x,y\right)=g\left(x-y\right),\;g=\widehat{\mu},\:\mu\in\mathcal{M}_{1}\left(\mathbb{R}\right)\right\} .
\end{equation}

\begin{lem}
\label{lem:s1}Fix $K_{g}\in G_{1}$, and let $\mathscr{H}_{K_{g}}$
be the corresponding RKHS. Then, for all $\rho\in\mathcal{M}\left(\mathbb{R}\right)$,
\begin{gather}
g\ast d\rho:=\int_{\mathbb{R}}K_{g}\left(\cdot,y\right)d\rho\left(y\right)\in\mathscr{H}_{K_{g}}\\
\Updownarrow\nonumber \\
\int_{\mathbb{R}}\left|\widehat{\rho}\left(\xi\right)\right|^{2}d\mu\left(\xi\right)<\infty.
\end{gather}
\end{lem}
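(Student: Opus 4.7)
My plan is to invoke the Bochner spectral representation of the RKHS $\mathscr{H}_{K_g}$. Since $g = \widehat{\mu}$ with $\mu \in \mathcal{M}_1(\mathbb{R})$, I have
$$K_g(x,y) \;=\; \int_\mathbb{R} e^{i\xi x}\,\overline{e^{i\xi y}}\,d\mu(\xi) \;=\; \langle e_x,\, e_y\rangle_{L^2(\mu)}, \qquad e_y(\xi) := e^{i\xi y}.$$
The first, and essentially only substantive, step is to establish a unitary $V:L^2(\mathbb{R},\mu)\to\mathscr{H}_{K_g}$ characterized by $V(e_y)=K_g(\cdot,y)$. The isometry of $V$ on the linear span of $\{e_y:y\in\mathbb{R}\}$ is immediate from the display above, and density of this span in $L^2(\mu)$ follows from Fourier uniqueness: if $F\in L^2(\mu)$ is orthogonal to every $e_y$, then the Fourier transform of the finite complex measure $\overline{F}\,d\mu$ vanishes identically, forcing $F=0$ $\mu$-a.e. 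Under $V$, a general $F\in L^2(\mu)$ is sent to the function $x\mapsto \int e^{i\xi x}F(\xi)\,d\mu(\xi)$, with $\|VF\|_{\mathscr{H}_{K_g}}=\|F\|_{L^2(\mu)}$.

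Next I would apply Fubini, justified by $\|g\|_\infty \le \mu(\mathbb{R})=1$ and the finiteness of $|\rho|(\mathbb{R})$, to obtain
$$(g*d\rho)(x) \;=\; \int_\mathbb{R}\!\int_\mathbb{R} e^{i\xi(x-y)}\,d\mu(\xi)\,d\rho(y) \;=\; \int_\mathbb{R} e^{i\xi x}\,\widehat{\rho}(-\xi)\,d\mu(\xi),$$
which identifies $g*d\rho = V(F_\rho)$ with $F_\rho(\xi):=\widehat{\rho}(-\xi)$. The isometry property of $V$ then yields
$$g*d\rho \in \mathscr{H}_{K_g} \;\Longleftrightarrow\; F_\rho \in L^2(\mu) \;\Longleftrightarrow\; \int_\mathbb{R}|\widehat{\rho}(-\xi)|^2\,d\mu(\xi) < \infty,$$
and the desired equivalence with $\int|\widehat{\rho}(\xi)|^2 d\mu(\xi)<\infty$ follows either by the change of variable $\xi\mapsto -\xi$ when $\mu$ is symmetric (which is the typical Hermitian case), or via $|\widehat{\rho}(-\xi)| = |\widehat{\rho}(\xi)|$ when $\rho$ is a real signed measure. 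As a by-product one reads off the quantitative identity $\|g*d\rho\|_{\mathscr{H}_{K_g}}^2 = \int|\widehat{\rho}(\xi)|^2\,d\mu(\xi)$.

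The main obstacle is the first step, namely producing the unitary $V$; once the density of the exponentials $\{e_y\}_{y\in\mathbb{R}}$ in $L^2(\mu)$ is in hand, the rest reduces to a Fubini swap and a direct application of the isometry. I note that this approach is conceptually aligned with the characterization of RKHS-membership by dual bounds used earlier in the paper (cf.\ the criterion of equation (\ref{eq:a7})), which could serve as an alternative route to the membership conclusion without invoking the full spectral isomorphism.
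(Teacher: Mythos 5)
Your proposal is correct in substance but follows a genuinely different route from the paper. The paper argues the two directions directly at the level of the kernel: for the forward implication it expands $\left\Vert g\ast d\rho\right\Vert _{\mathscr{H}_{K_{g}}}^{2}=\iint g\left(y-z\right)d\rho\left(y\right)d\rho\left(z\right)$ and applies Bochner's representation to get $\int\left|\widehat{\rho}\right|^{2}d\mu$; for the converse it bounds the functional $\sum c_{k}K_{g}\left(\cdot,x_{k}\right)\mapsto\sum c_{k}\left(g\ast d\rho\right)\left(x_{k}\right)$ by Cauchy--Schwarz and concludes by density and Riesz representation --- exactly the membership criterion (\ref{eq:a7})--(\ref{eq:a8}) you mention as an alternative. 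You instead construct the global spectral isomorphism $L^{2}\left(\mu\right)\simeq\mathscr{H}_{K_{g}}$ first (isometry on the span of exponentials, with surjectivity from Fourier uniqueness applied to the finite measure $\overline{F}\,d\mu$), and then read off membership and the norm identity $\left\Vert g\ast d\rho\right\Vert _{\mathscr{H}_{K_{g}}}^{2}=\int\left|\widehat{\rho}\right|^{2}d\mu$ in one stroke; this is cleaner and delivers the quantitative identity that the paper needs later, whereas the paper's duality argument avoids invoking density of exponentials and adapts more directly to non-stationary kernels. Two small points to tidy, neither of which is a gap relative to the paper's own level of rigor: (a) with $V\left(e_{y}\right)=K_{g}\left(\cdot,y\right)$ the pointwise formula should carry a conjugation, $V\left(F\right)\left(x\right)=\int e^{i\xi x}\overline{F\left(\xi\right)}\,d\mu\left(\xi\right)$ (or redefine $e_{y}\left(\xi\right)=e^{-i\xi y}$), since $\int e^{i\xi x}e_{y}\left(\xi\right)d\mu\left(\xi\right)=K_{g}\left(x,-y\right)$; this harmless slip is already absorbed by your $\left|\widehat{\rho}\left(-\xi\right)\right|$ discussion for real $\rho$, and the paper's own computation likewise implicitly takes $\rho$ real. (b) When $\rho$ is genuinely a finite measure, $\widehat{\rho}$ is bounded and $\mu\left(\mathbb{R}\right)=1$, so $\int\left|\widehat{\rho}\right|^{2}d\mu<\infty$ holds automatically and both sides of the equivalence are always true; the lemma acquires content only when $g\ast d\rho$ is extended beyond finite measures (as the paper does afterwards with $d\rho=\varphi\,d\lambda$, $\varphi\in L^{2}\left(d\lambda\right)$, and with $\delta_{x}'$), and in that regime your Fubini justification --- like the paper's --- would have to be replaced by a limiting argument; your remark that the criterion (\ref{eq:a7}) offers an alternative route is precisely how the paper handles its converse.
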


\begin{proof}
Assume $g\ast d\rho\in\mathscr{H}_{K_{g}}$, then 
\begin{align*}
\left\Vert g\ast d\rho\right\Vert _{\mathscr{H}_{K_{g}}}^{2} & =\iint\left\langle K_{g}\left(\cdot,y\right),K_{g}\left(\cdot,z\right)\right\rangle _{\mathscr{H}_{K_{g}}}d\rho\left(y\right)d\rho\left(z\right)\\
 & =\iint g\left(y-z\right)d\rho\left(y\right)d\rho\left(z\right)\\
 & =\int\left(\iint e^{i\xi\left(y-z\right)}d\rho\left(y\right)d\rho\left(z\right)\right)d\mu\left(\xi\right)\\
 & =\int\left|\widehat{\rho}\left(\xi\right)\right|^{2}d\mu\left(\xi\right)<\infty.
\end{align*}

Conversely, suppose $C:=\int\left|\widehat{\rho}\left(\xi\right)\right|^{2}d\mu\left(\xi\right)<\infty$.
Then, for all $\sum c_{k}K_{g}\left(\cdot,x_{k}\right)$, we have
\begin{eqnarray*}
\sum c_{k}K_{g}\left(\cdot,x_{k}\right) & \longmapsto & \left|\sum c_{k}\left(g\ast d\rho\right)\left(x_{k}\right)\right|^{2}\\
 & = & \left|\sum c_{k}\int\int e^{i\xi\left(x_{k}-y\right)}d\rho\left(y\right)d\mu\left(\xi\right)\right|^{2}\\
 & \leq & \int\left|\sum c_{k}e^{i\xi x_{k}}\right|\left|\widehat{\rho}\left(\xi\right)\right|d\mu\left(\xi\right)^{2}\\
 & \leq & \int\left|\sum c_{k}e^{i\xi x_{k}}\right|^{2}d\mu\left(\xi\right)\int\left|\widehat{\rho}\left(\xi\right)\right|^{2}d\mu\left(\xi\right)\\
 & = & C\cdot\sum_{k}\sum_{l}\overline{c_{k}}c_{l}K_{g}\left(x_{k},x_{l}\right).
\end{eqnarray*}
It follows that $g\ast d\rho\in\mathscr{H}_{K_{g}}$ by density and
Riesz's theorem. 
\end{proof}
Fix $K_{g}\in G_{1}$, and let $\mathscr{H}_{K_{g}}$ be the RKHS.
Let $d\lambda$ denote the Lebesgue measure on $\mathbb{R}$. Suppose
$\left\{ \varphi\in L^{2}\left(d\lambda\right):\widehat{\varphi}\in L^{2}\left(\mu\right)\right\} $
is dense in $L^{2}\left(d\lambda\right)$. Then, the operator 
\begin{equation}
T_{\lambda}:\mathscr{H}_{K_{g}}\rightarrow L^{2}\left(d\lambda\right),\quad T_{\lambda}\left(\sum_{i}c_{i}K_{g}\left(\cdot,x_{i}\right)\right)=\sum_{i}c_{i}K_{g}\left(\cdot,x_{i}\right)\label{eq:F5}
\end{equation}
is densely defined and closable, and its adjoint is given by 
\begin{equation}
T_{\lambda}^{*}:L^{2}\left(d\lambda\right)\rightarrow\mathscr{H}_{K_{g}},\quad T_{\lambda}^{*}\left(\varphi\right)=g\ast\varphi,\;\forall\varphi\in dom\left(T_{\lambda}^{*}\right)\label{eq:F6}
\end{equation}
where $dom\left(T_{\lambda}^{*}\right)=\left\{ \varphi\in L^{2}\left(d\lambda\right):\widehat{\varphi}\in L^{2}\left(\mu\right)\right\} $. 
\begin{cor}
For all $\varphi\in dom\left(T_{\lambda}^{*}\right)$, we have 
\begin{equation}
\left\Vert T_{\lambda}^{*}\varphi\right\Vert _{\mathscr{H}_{K_{g}}}^{2}=\int\left|\widehat{\varphi}\left(\xi\right)\right|^{2}d\mu\left(\xi\right).
\end{equation}
\end{cor}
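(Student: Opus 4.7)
The plan is to deduce the identity from \lemref{s1} (more precisely, from the computation carried out in its proof) by specializing to the absolutely continuous case $d\rho = \varphi \, d\lambda$. First, I would observe that by (\ref{eq:F6}), for $\varphi \in dom\left(T_{\lambda}^{*}\right)$ we have $T_{\lambda}^{*}\varphi = g * \varphi = \int_{\mathbb{R}} K_{g}\left(\cdot, y\right) \varphi\left(y\right) d\lambda\left(y\right)$, which is precisely the RKHS-valued integral treated in \lemref{s1}, now with the complex measure $d\rho\left(y\right) := \varphi\left(y\right) d\lambda\left(y\right)$. In particular, $\widehat{\rho}\left(\xi\right) = \widehat{\varphi}\left(\xi\right)$, and the standing hypothesis $\varphi \in dom\left(T_{\lambda}^{*}\right)$, i.e., $\widehat{\varphi} \in L^{2}\left(\mu\right)$, supplies exactly the finiteness condition $\int|\widehat{\rho}|^{2}\, d\mu < \infty$ that appears in \lemref{s1}.

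Next, I would repeat the key computation from the proof of \lemref{s1} in this slightly more general setting: using the reproducing identity $\left\langle K_{g}\left(\cdot,y\right), K_{g}\left(\cdot,z\right) \right\rangle_{\mathscr{H}_{K_{g}}} = g\left(y-z\right)$ together with Bochner's representation $g\left(y-z\right) = \int e^{i\xi\left(y-z\right)} d\mu\left(\xi\right)$, one obtains
\begin{align*}
\left\Vert T_{\lambda}^{*}\varphi \right\Vert_{\mathscr{H}_{K_{g}}}^{2}
&= \iint g\left(y-z\right)\, \varphi\left(y\right) \overline{\varphi\left(z\right)}\, d\lambda\left(y\right) d\lambda\left(z\right) \\
&= \int \left( \int e^{i\xi y} \varphi\left(y\right) d\lambda\left(y\right) \right) \overline{\left( \int e^{i\xi z} \varphi\left(z\right) d\lambda\left(z\right) \right)} d\mu\left(\xi\right) \\
&= \int \left|\widehat{\varphi}\left(\xi\right)\right|^{2} d\mu\left(\xi\right),
\end{align*}
where the middle step is a Fubini interchange between the Bochner integral for $g$ and the two $d\lambda$-integrals against $\varphi$.

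The main obstacle is the justification of Fubini, which is the same technical point already handled in \lemref{s1}. I would address it exactly as in that proof: first verify the identity on the dense subspace $span\{K_{g}\left(\cdot, x\right) : x \in \mathbb{R}\} \subset \mathscr{H}_{K_{g}}$, where the corresponding ``measures'' are finite atomic and Fubini is trivial; then, for general $\varphi \in dom\left(T_{\lambda}^{*}\right)$, use the Cauchy--Schwarz estimate $\left|\sum c_{k} e^{i\xi x_{k}}\widehat{\varphi}\left(\xi\right)\right| \leq \left|\sum c_{k} e^{i\xi x_{k}}\right| \left|\widehat{\varphi}\left(\xi\right)\right|$ integrated against $d\mu$ to produce the bounded linear functional on $\mathscr{H}_{K_{g}}$ whose Riesz representative is $T_{\lambda}^{*}\varphi = g \ast \varphi$, exactly as in the converse half of \lemref{s1}. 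Because $T_{\lambda}$ is densely defined and closable by hypothesis, this argument determines $T_{\lambda}^{*}\varphi$ and its RKHS norm uniquely, and the passage from the dense subspace to all of $dom\left(T_{\lambda}^{*}\right)$ is then routine.
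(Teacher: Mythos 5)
Your proposal is correct and follows essentially the same route as the paper, which simply invokes \lemref{s1} with the choice $d\rho=\varphi\,d\lambda$ (the paper's ``$d\rho=g\,d\lambda$'' appears to be a typo) so that $\widehat{\rho}=\widehat{\varphi}$ and the hypothesis $\widehat{\varphi}\in L^{2}(\mu)$ is exactly the finiteness condition of the lemma. Your added care in rerunning the Fubini/Riesz argument for a complex density and the $L^{2}$-Fourier transform only fills in details the paper leaves implicit.
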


\begin{proof}
This follows from \lemref{s1} by setting $d\rho=gd\lambda$, and
$\widehat{\varphi}$ is the $L^{2}$-Fourier transform of $\varphi$. 
\end{proof}
\begin{example}
\label{exa:g3}Consider the following two p.d. kernels on $\mathbb{R}$:
\begin{equation}
K_{1}\left(x,y\right)=e^{-\left|x-y\right|},\quad,K_{2}\left(x,y\right)=e^{-\frac{1}{2}\left(x-y\right)^{2}}.\label{eq:g8}
\end{equation}
Note that 
\begin{align*}
g_{1}\left(x\right) & =e^{-\left|x\right|}=\int e^{i\xi x}d\mu_{1}\left(\xi\right)\quad d\mu_{1}\left(\xi\right)=\frac{1}{\pi}\frac{1}{1+\xi^{2}}d\xi,\\
g_{2}\left(x\right) & =e^{-\frac{1}{2}x^{2}}=\int e^{i\xi x}d\mu_{2}\left(\xi\right)\quad d\mu_{2}\left(\xi\right)=\frac{1}{\sqrt{2\pi}}e^{-\frac{1}{2}\xi^{2}}d\xi.
\end{align*}
Moreover, for $K_{1}$, if $\varphi\in L^{2}\left(\mathbb{R}\right)$,
then 
\[
\left\Vert g_{1}\ast\varphi\right\Vert _{\mathscr{H}_{K_{1}}}^{2}=\int_{\mathbb{R}}\frac{\left|\widehat{\varphi}\left(\xi\right)\right|^{2}d\xi}{1+\xi^{2}}=\left\langle \varphi,\left(1-\left(d/dx\right)^{2}\right)^{-1}\varphi\right\rangle _{L^{2}\left(\mathbb{R}\right)}.
\]
In other words, the RKHS is the RKHS from the Green's function for
$1-\left(d/dx\right)^{2}$, or $1-\Delta$ in $\mathbb{R}^{n}$, $n>1$. 
\end{example}

Given $K_{g}\in G_{1}$, the convolution $\varphi\mapsto g\ast\varphi\in\mathscr{H}_{K_{g}}$
may be extended to measures or distributions.
\begin{lem}
Let $K_{g}\in G_{1}$, and $\mathscr{H}_{K_{g}}$ be the corresponding
RKHS. Then, 
\begin{equation}
g\left(x-\cdot\right)=g\ast\delta_{x}\in\mathscr{H}_{K_{g}}
\end{equation}
and 
\begin{equation}
g\ast\delta'_{x}\in\mathscr{H}_{K_{g}}\Longleftrightarrow\int\left|\xi\right|^{2}\mu\left(d\xi\right)<\infty.\label{eq:g10}
\end{equation}
Note (\ref{eq:g10}) is satisfied for $K_{2}$ but not for $K_{1}$
in \exaref{g3}.
\end{lem}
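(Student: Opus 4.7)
The plan is to handle the two statements in parallel by exploiting the Bochner/Fourier representation $g(x)=\int e^{i\xi x}\,d\mu(\xi)$ underlying $K_g\in G_1$, and then applying (a small extension of) \lemref{s1}.

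\emph{First statement.} The identity $g(x-\cdot)=g\ast\delta_x$ is formal, and since $g(x-y)=K_g(x,y)=\overline{K_g(\cdot,x)(y)}$, the function $g(x-\cdot)$ agrees with the reproducing kernel section $K_g(\cdot,x)$ up to the symmetry of $g$; hence it lies in $\mathscr{H}_{K_g}$ by the reproducing property. Alternatively, one reads this off from \lemref{s1} applied to $\rho=\delta_x\in\mathcal{M}_1(\mathbb{R})$: $\widehat{\delta_x}(\xi)=e^{i\xi x}$, so $\int|\widehat{\delta_x}(\xi)|^2\,d\mu(\xi)=\mu(\mathbb{R})=1<\infty$.

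\emph{Second statement.} First, I would make precise what $g\ast\delta'_x\in\mathscr{H}_{K_g}$ means: it is the limit, in $\mathscr{H}_{K_g}$, of the difference quotients
\[
D_h:=\frac{K_g(\cdot,x+h)-K_g(\cdot,x)}{h}\in\mathscr{H}_{K_g},\qquad h\neq 0,
\]
provided the limit exists. By the reproducing property and $K_g(a,b)=g(a-b)$, direct expansion gives
\[
\|D_h\|_{\mathscr{H}_{K_g}}^{2}=\frac{2g(0)-g(h)-g(-h)}{h^{2}}=\int_{\mathbb{R}}\frac{2\bigl(1-\cos(\xi h)\bigr)}{h^{2}}\,d\mu(\xi).
\]
The integrand increases to $\xi^{2}$ as $h\to 0$ (and is dominated, for small $h$, by $\xi^2$ on compact sets while being bounded elsewhere by $4/h^2$, which is handled by monotone convergence on $\{|\xi|\le R\}$ and tail control via Fatou). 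Hence $\{D_h\}$ is Cauchy in $\mathscr{H}_{K_g}$ if and only if $\int\xi^{2}\,d\mu(\xi)<\infty$, in which case the limit $g\ast\delta'_x$ exists in $\mathscr{H}_{K_g}$ and has norm squared equal to $\int\xi^{2}\,d\mu(\xi)$. A parallel route is to adapt the Cauchy--Schwarz argument in the proof of \lemref{s1}, with $\widehat{\delta'_x}(\xi)=i\xi e^{i\xi x}$, so that $|\widehat{\delta'_x}(\xi)|^{2}=\xi^{2}$ and the integrability criterion of that lemma reproduces (\ref{eq:g10}); this route also shows the limit is the functional $\sum c_k K_g(\cdot,x_k)\mapsto\sum c_k\,g'(x_k-x)$ extended via Riesz.

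\emph{The two examples.} For $K_1$ one has $d\mu_1(\xi)=\frac{1}{\pi(1+\xi^2)}\,d\xi$, and $\int_{\mathbb{R}}\frac{\xi^2}{\pi(1+\xi^2)}\,d\xi=\infty$, so (\ref{eq:g10}) fails. For $K_2$ one has $d\mu_2(\xi)=\frac{1}{\sqrt{2\pi}}e^{-\xi^2/2}\,d\xi$, a standard Gaussian, so $\int\xi^{2}\,d\mu_{2}(\xi)=1<\infty$ and (\ref{eq:g10}) holds. These two computations finish the final assertion.

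\emph{Main obstacle.} The routine Fourier side of the argument is straightforward; the only genuinely delicate point is giving a rigorous meaning to $g\ast\delta'_x\in\mathscr{H}_{K_g}$, since $\delta'_x$ is a distribution rather than a measure and \lemref{s1} is stated for $\rho\in\mathcal{M}(\mathbb{R})$. Framing it as a limit of difference quotients of reproducing kernel sections, and justifying the passage to the limit inside the spectral integral, is where the real work lies.
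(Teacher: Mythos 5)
Your proposal is correct, and your ``parallel route'' is in fact the paper's entire proof: the paper simply records $\widehat{\delta_x}(\xi)=e^{i\xi x}$ and $\widehat{\delta'_x}(\xi)=i\xi e^{i\xi x}$ and invokes the membership criterion of \lemref{s1} (extended from measures to distributions, as announced just before the lemma), so that $\int|\widehat{\delta_x}|^2d\mu=\mu(\mathbb{R})=g(0)=1$ gives the first assertion and $\int|\widehat{\delta'_x}|^2d\mu=\int|\xi|^2\mu(d\xi)$ gives (\ref{eq:g10}); the check for $K_1,K_2$ is exactly your last paragraph and is not written out in the paper. What your primary route adds is a precise meaning for $g\ast\delta'_x\in\mathscr{H}_{K_g}$ as the $\mathscr{H}_{K_g}$-limit of the difference quotients $D_h$ of kernel sections, together with the identity $\|D_h\|^2_{\mathscr{H}_{K_g}}=\int 2(1-\cos(\xi h))h^{-2}\,d\mu(\xi)$ --- precisely the point the paper leaves implicit. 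Two details to tighten if you keep that route: (i) the integrand equals $\xi^2\bigl(\sin(\xi h/2)/(\xi h/2)\bigr)^2$, which is not monotone in $h$ for all $\xi$ (only for $|\xi h|\le\pi$), but the bound $2(1-\cos(\xi h))h^{-2}\le\xi^2$ together with Fatou and dominated convergence still yields $\|D_h\|^2\to\int\xi^2d\mu$, finite or infinite; (ii) convergence of the norms alone does not make $(D_h)$ Cauchy, so for the ``if'' direction you should control the cross terms, e.g.\ under the isometry $K_g(\cdot,x)\mapsto e^{i\xi x}\in L^2(\mu)$ one gets $\|D_h-D_{h'}\|^2=\int\bigl|\frac{e^{i\xi h}-1}{h}-\frac{e^{i\xi h'}-1}{h'}\bigr|^2d\mu(\xi)\to0$ by dominated convergence with dominant $4\xi^2$. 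With those repairs your difference-quotient argument is a self-contained and strictly more informative justification than the paper's one-line Fourier criterion, at the cost of a longer computation.
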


\begin{proof}
Using $g\ast\delta_{x}=g\left(x-\cdot\right)$, we have $g\ast\delta_{x}\in\mathscr{H}_{K_{g}}$
and 
\[
\left\Vert g\left(x-\cdot\right)\right\Vert _{\mathscr{H}_{K_{g}}}^{2}=\left\langle g\left(x-\cdot\right),g\left(x-\cdot\right)\right\rangle _{\mathscr{H}_{K_{g}}}=g\left(x-x\right)=g\left(0\right)=1.
\]
Equivalently, $\delta_{x}\leftrightarrow\widehat{\delta}_{x}\left(\xi\right)=e^{i\xi x}$,
$\xi\in\mathbb{R}$, and 
\[
\int|\widehat{\delta_{x}}|^{2}d\mu=\int\left|e^{ix\xi}\right|^{2}\mu\left(d\xi\right)=\mu\left(\mathbb{R}\right)=g\left(0\right)=1.
\]

Similarly, $\delta'_{x}\leftrightarrow\widehat{\delta'_{x}}\left(\xi\right)=i\xi e^{ix\xi}$.
Thus $g\ast\delta'_{x}\in\mathscr{H}_{K_{g}}$ if and only if $\int\left|\xi\right|^{2}\mu\left(d\xi\right)<\infty$. 
\end{proof}
\begin{rem}
Given $K_{g}\left(x,y\right)=g\left(x-y\right)$, where $g\left(x\right)=\int e^{i\xi x}\mu\left(d\xi\right)$,
and $\mu$ is a finite positive Borel measure on $\mathbb{R}$, the
reproducing property of $\mathscr{H}_{K_{g}}$ below may be verified
using Fourier-inversion: 
\[
\left\langle g\left(x-\cdot\right),\varphi\ast g\right\rangle _{\mathscr{H}_{K}}=\left(\varphi\ast g\right)\left(x\right),\;\forall x\in\mathbb{R}.
\]
\end{rem}

\begin{proof}
Indeed, we have 
\begin{align*}
\left\langle g\left(x-\cdot\right),\varphi\ast g\right\rangle _{\mathscr{H}_{K}} & =\int\overline{e^{i\xi x}}\widehat{\varphi}\left(\xi\right)\mu\left(d\xi\right)\\
 & =\int_{\mathbb{R}}e^{i\xi x}\widehat{\varphi\ast g}\left(\xi\right)d\xi=\left(\varphi\ast g\right)\left(x\right).
\end{align*}
\end{proof}
\begin{thm}
Fix $K_{g}\in G_{1}$, and let $\mathscr{H}_{K_{g}}$ be the RKHS.
Let $T_{\lambda}:\mathscr{H}_{K_{g}}\rightarrow L^{2}\left(d\lambda\right)$
be as in (\ref{eq:F5}), where $d\lambda$ denotes the Lebesgue measure
on $\mathbb{R}$. Let $f^{g,\lambda}$ be the solution in \lemref{B2},
i.e., 
\begin{equation}
f^{g,\lambda}=T_{\lambda}^{*}\left(\alpha+T_{\lambda}T_{\lambda}^{*}\right)^{-1}\varphi
\end{equation}
where $\varphi\in L^{2}\left(d\lambda\right)$ is fixed. Then, by
the $L^{2}$-Fourier transform, we have 
\begin{equation}
\widehat{T_{\lambda}f^{g,\lambda}}=\left[T_{\lambda}T_{\lambda}^{*}\left(\alpha+T_{\lambda}T_{\lambda}^{*}\right)^{-1}\varphi\right]^{\wedge}=\frac{\widehat{g}}{\alpha+\widehat{g}}\widehat{\varphi}.
\end{equation}

Moreover, the optimal selections from \lemref{c1} and \lemref{d3},
respectively, admit the following explicit spectral representations:
\begin{align}
\left\Vert f^{g,\lambda}\right\Vert _{\mathscr{H}_{K_{g}}}^{2} & =\int\frac{\widehat{g}\left(\lambda\right)}{\left(\alpha+\widehat{g}\left(\lambda\right)\right)^{2}}\left|\widehat{\varphi}\left(\lambda\right)\right|^{2}d\lambda\label{eq:F13}\\
\left\Vert W_{K_{g}}f^{g,\lambda}\right\Vert _{\mathscr{H}_{K_{g}}\times L^{2}\left(d\lambda\right)}^{2} & =\int\frac{\widehat{g}\left(\lambda\right)}{\alpha+\widehat{g}\left(\lambda\right)}\left|\widehat{\varphi}\left(\lambda\right)\right|^{2}d\lambda\label{eq:F14}
\end{align}
\end{thm}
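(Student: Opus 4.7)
The plan is to exploit the stationary structure of $K_g$ to diagonalize $T_\lambda T_\lambda^*$ via the $L^2$-Fourier transform, and then read off the three stated formulas from Lemma \ref{lem:c1} and Lemma \ref{lem:d3} by functional calculus.

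First I would identify the operator $T_\lambda T_\lambda^*$ acting on $L^2(d\lambda)$. Combining (\ref{eq:F5}) and (\ref{eq:F6}), one sees immediately that $T_\lambda T_\lambda^* \varphi = g * \varphi$ for $\varphi$ in the natural dense domain $\{\varphi \in L^2(d\lambda) : \widehat{\varphi} \in L^2(\mu)\}$. By the convolution theorem and Plancherel, the Fourier transform $\mathcal{F}$ is a unitary intertwiner: $\mathcal{F}(T_\lambda T_\lambda^*) \mathcal{F}^{-1}$ acts as multiplication by $\widehat{g}$ on $L^2(d\lambda)$. In other words, the spectral representation of $T_\lambda T_\lambda^*$ is $M_{\widehat{g}}$, and its projection-valued measure $Q^{\lambda, K_g}$ pulls back, via $\mathcal{F}$, to multiplication by indicators $\mathbf{1}_{\widehat{g}^{-1}(S)}$.

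With this identification, the bounded functional calculus gives $(\alpha + T_\lambda T_\lambda^*)^{-1} = \mathcal{F}^{-1} M_{1/(\alpha+\widehat{g})} \mathcal{F}$ and hence
\begin{equation*}
T_\lambda T_\lambda^* (\alpha + T_\lambda T_\lambda^*)^{-1} = \mathcal{F}^{-1} M_{\widehat{g}/(\alpha+\widehat{g})} \mathcal{F},
\end{equation*}
which is exactly assertion (1): $\widehat{T_\lambda f^{g,\lambda}} = \tfrac{\widehat{g}}{\alpha+\widehat{g}}\widehat{\varphi}$. Note that $|\widehat{g}/(\alpha+\widehat{g})| \le 1/\alpha \cdot \|\widehat{g}\|_\infty$ and, since $g = \widehat{\mu}$ with $\mu$ a probability measure gives $\|g\|_\infty \le 1$, all multipliers involved are bounded, so no unbounded operator subtleties arise at this step.

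For (\ref{eq:F13}), I would start from the spectral formula in (\ref{eq:C2}) of Lemma \ref{lem:c1}: $\|f^{g,\lambda}\|_{\mathscr{H}_{K_g}}^2 = \|(T_\lambda T_\lambda^*)^{1/2}(\alpha + T_\lambda T_\lambda^*)^{-1}\varphi\|_{L^2(d\lambda)}^2$. Conjugating by $\mathcal{F}$ and applying Plancherel converts this into $\int (\widehat{g}/(\alpha+\widehat{g})^2) |\widehat{\varphi}|^2\, d\lambda$. For (\ref{eq:F14}), I would use (\ref{eq:d11}) of Lemma \ref{lem:d3}, which expresses $\|W_{K_g} f^{g,\lambda}\|^2$ as $\int \tfrac{x}{\alpha+x}\,\|Q^{\lambda,K_g}(dx)\varphi\|^2$; translating the spectral integral in $x$ back through the identification $Q^{\lambda,K_g} \simeq M_{\mathbf{1}_{\widehat{g}^{-1}(\cdot)}}$ turns it into an integral of $\tfrac{\widehat{g}(\lambda)}{\alpha+\widehat{g}(\lambda)}$ against $|\widehat{\varphi}(\lambda)|^2\, d\lambda$, as claimed.

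The only real obstacle is bookkeeping around the spectral change of variables: one must verify that pulling $Q^{\lambda,K_g}$ back through $\mathcal{F}$ agrees with the pushforward of Lebesgue measure on the frequency side under the map $\lambda \mapsto \widehat{g}(\lambda)$, so that spectral integrals against $Q^{\lambda,K_g}\varphi$ reduce to ordinary Lebesgue integrals of $F(\widehat{g}(\lambda))|\widehat{\varphi}(\lambda)|^2$. Since the functions $F(x) = x/(\alpha+x)^2$ and $F(x) = x/(\alpha+x)$ are bounded and continuous on $[0, \|g\|_\infty]$, this reduction is justified by standard measurable functional calculus and requires no extra hypotheses beyond what is built into the definition of $\mathrm{dom}(T_\lambda^*)$.
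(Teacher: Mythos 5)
Your proposal is correct and takes essentially the same route as the paper: identify $T_{\lambda}T_{\lambda}^{*}\varphi=g\ast\varphi$, diagonalize it under the $L^{2}$-Fourier transform as multiplication by $\widehat{g}$, and then read (\ref{eq:F13})--(\ref{eq:F14}) off from (\ref{eq:C2}) and (\ref{eq:d11}) via Plancherel and functional calculus. One minor correction: the boundedness of the multipliers does not follow from $\left\Vert g\right\Vert _{\infty}\leq1$ (that bounds $g$, not $\widehat{g}$, and $\widehat{g}$ need not be bounded); it follows from the fact that $x\mapsto x/(\alpha+x)$ and $x\mapsto x/(\alpha+x)^{2}$ are bounded on $[0,\infty)$, which is all the spectral calculus requires.
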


\begin{proof}
By the definition of $T_{\lambda}^{*}$ from (\ref{eq:F6}), it follows
that $T_{\lambda}T_{\lambda}^{*}\varphi=g\ast\varphi\in L^{2}\left(d\lambda\right)$,
and so $\widehat{T_{\lambda}T_{\lambda}^{*}\varphi}=\widehat{g}\widehat{\varphi}$. 

It follows from this, that 
\begin{eqnarray*}
\left\Vert f^{g,\lambda}\right\Vert _{\mathscr{H}_{K_{g}}}^{2} & \underset{\left(\ref{eq:C2}\right)}{=} & \left\Vert \left(T_{\lambda}T_{\lambda}^{*}\right)^{1/2}\left(\alpha+T_{\lambda}T_{\lambda}^{*}\right)^{-1}\varphi\right\Vert _{L^{2}\left(d\lambda\right)}^{2}\\
 & = & \int\frac{\widehat{g}\left(\lambda\right)}{\left(\alpha+\widehat{g}\left(\lambda\right)\right)^{2}}\left|\widehat{\varphi}\left(\lambda\right)\right|^{2}d\lambda
\end{eqnarray*}
and on the other hand, 
\begin{eqnarray*}
\left\Vert W_{K_{g}}f^{g,\lambda}\right\Vert _{\mathscr{H}_{K_{g}}\times L^{2}\left(d\lambda\right)}^{2} & \underset{\left(\ref{eq:d11}\right)}{=} & \left\langle \varphi,T_{\lambda}T_{\lambda}^{*}\left(\alpha+T_{\lambda}T_{\lambda}^{*}\right)^{-1}\varphi\right\rangle _{L^{2}\left(d\lambda\right)}\\
 & = & \int\frac{\widehat{g}\left(\lambda\right)}{\alpha+\widehat{g}\left(\lambda\right)}\left|\widehat{\varphi}\left(\lambda\right)\right|^{2}d\lambda.
\end{eqnarray*}
\end{proof}
\begin{example}
Let $K_{1}$ and $K_{2}$ be the p.d. kernels from (\ref{eq:g8}).
The formulas in (\ref{eq:F13})-(\ref{eq:F14}) take explicit forms,
summarized in \tabref{F1}.
\end{example}

\renewcommand{\arraystretch}{1.5}

\begin{table}[H]
\begin{tabular}{|l|c|c|c|c|}
\hline 
$K_{g}=g\left(x-y\right)$ & $g=\widehat{\mu}$ & $\mu$ & $\frac{\widehat{g}}{\left(\alpha+\widehat{g}\right)^{2}}$ & $\frac{\widehat{g}}{\alpha+\widehat{g}}$\tabularnewline
\hline 
$K_{1}=e^{-\left|x-y\right|}$ & $g_{1}=e^{-\left|x\right|}$ & $d\mu_{1}=\frac{1}{1+\xi^{2}}d\xi$ & $\frac{1+\xi^{2}}{\left(1+\alpha\left(1+\xi^{2}\right)\right)^{2}}$ & $\frac{1}{1+\alpha\left(1+\xi^{2}\right)}$\tabularnewline
\hline 
$K_{2}=e^{-\frac{1}{2}\left(x-y\right)^{2}}$ & $g_{2}=e^{-\frac{1}{2}x^{2}}$ & $d\mu_{2}=e^{-\frac{1}{2}\xi^{2}}d\xi$ & $\frac{e^{\frac{1}{2}x^{2}}}{\left(1+\alpha e^{\frac{1}{2}x^{2}}\right)^{2}}$ & $\frac{1}{1+\alpha e^{\frac{1}{2}\xi^{2}}}$\tabularnewline
\hline 
\end{tabular}

\caption{\label{tab:F1}The p.d. kernels $K_{1}$ and $K_{2}$.}
\end{table}

\renewcommand{\arraystretch}{1}

\bibliographystyle{amsalpha}
\bibliography{ref}

\newcommand{\etalchar}[1]{$^{#1}$}
\providecommand{\bysame}{\leavevmode\hbox to3em{\hrulefill}\thinspace}
\providecommand{\MR}{\relax\ifhmode\unskip\space\fi MR }
\providecommand{\MRhref}[2]{%
  \href{http://www.ams.org/mathscinet-getitem?mr=#1}{#2}
}
\providecommand{\href}[2]{#2}
\begin{thebibliography}{CnBCFR21}

\bibitem[CnBCFR21]{MR4341244}
Juan Cervi\~{n}o, Juan~Andr\'{e}s Bazerque, Miguel Calvo-Fullana, and Alejandro
  Ribeiro, \emph{Multi-task reinforcement learning in reproducing kernel
  {H}ilbert spaces via cross-learning}, IEEE Trans. Signal Process. \textbf{69}
  (2021), 5947--5962. \MR{4341244}

\bibitem[GLB{\etalchar{+}}12]{10.5555/3042573.3042803}
Steffen Gr\"{u}new\"{a}lder, Guy Lever, Luca Baldassarre, Sam Patterson, Arthur
  Gretton, and Massimilano Pontil, \emph{Conditional mean embeddings as
  regressors}, Proceedings of the 29th International Coference on International
  Conference on Machine Learning (Madison, WI, USA), ICML'12, Omnipress, 2012,
  pp.~1803--1810.

\bibitem[JST21]{jorgensen2021positive}
Palle E.~T. Jorgensen, Myung-Sin Song, and James Tian, \emph{Positive definite
  kernels, algorithms, frames, and approximations}, 2021.

\bibitem[JT19a]{MR3964760}
Palle Jorgensen and Feng Tian, \emph{Decomposition of {G}aussian processes, and
  factorization of positive definite kernels}, Opuscula Math. \textbf{39}
  (2019), no.~4, 497--541. \MR{3964760}

\bibitem[JT19b]{MR4020693}
\bysame, \emph{Realizations and factorizations of positive definite kernels},
  J. Theoret. Probab. \textbf{32} (2019), no.~4, 1925--1942. \MR{4020693}

\bibitem[JT20]{MR4106884}
Palle Jorgensen and James Tian, \emph{Sampling with positive definite kernels
  and an associated dichotomy}, Adv. Theor. Math. Phys. \textbf{24} (2020),
  no.~1, 125--154. \MR{4106884}

\bibitem[JT21a]{MR4218424}
\bysame, \emph{Reproducing kernels: harmonic analysis and some of their
  applications}, Appl. Comput. Harmon. Anal. \textbf{52} (2021), 279--302.
  \MR{4218424}

\bibitem[JT21b]{MR4274591}
\bysame, \emph{Infinite-dimensional analysis---operators in {H}ilbert space;
  stochastic calculus via representations, and duality theory}, World
  Scientific Publishing Co. Pte. Ltd., Hackensack, NJ, [2021] \copyright 2021.
  \MR{4274591}

\bibitem[JT22]{MR4295177}
\bysame, \emph{Reproducing kernels and choices of associated feature spaces, in
  the form of {$L^2$}-spaces}, J. Math. Anal. Appl. \textbf{505} (2022), no.~2,
  Paper No. 125535, 31. \MR{4295177}

\bibitem[KSS20]{MR4121886}
Ilja Klebanov, Ingmar Schuster, and T.~J. Sullivan, \emph{A rigorous theory of
  conditional mean embeddings}, SIAM J. Math. Data Sci. \textbf{2} (2020),
  no.~3, 583--606. \MR{4121886}

\bibitem[LMY16]{MR3547633}
Yeon~Ju Lee, Charles~A. Micchelli, and Jungho Yoon, \emph{On multivariate
  discrete least squares}, J. Approx. Theory \textbf{211} (2016), 78--84.
  \MR{3547633}

\bibitem[LRI21]{MR4255285}
David~K. Lim, Naim~U. Rashid, and Joseph~G. Ibrahim, \emph{Model-based feature
  selection and clustering of {RNA}-seq data for unsupervised subtype
  discovery}, Ann. Appl. Stat. \textbf{15} (2021), no.~1, 481--508.
  \MR{4255285}

\bibitem[LS21]{MR4349381}
Chi-Ken Lu and Patrick Shafto, \emph{Conditional {D}eep {G}aussian {P}rocesses:
  {M}ulti-{F}idelity {K}ernel {L}earning}, Entropy \textbf{23} (2021), no.~11,
  Paper No. 1545. \MR{4349381}

\bibitem[LSTSS16]{aaaiLeverSSS16}
Guy Lever, John Shawe-Taylor, Ronnie Stafford, and Csaba Szepesv{\'a}ri,
  \emph{Compressed conditional mean embeddings for model-based reinforcement
  learning}, AAAI, 2016, pp.~1779--1787.

\bibitem[LZW21]{MR4242933}
Tingyu Lai, Zhongzhan Zhang, and Yafei Wang, \emph{A kernel-based measure for
  conditional mean dependence}, Comput. Statist. Data Anal. \textbf{160}
  (2021), Paper No. 107246, 22. \MR{4242933}

\bibitem[LZWK21]{MR4187269}
Tingyu Lai, Zhongzhan Zhang, Yafei Wang, and Linglong Kong, \emph{Testing
  independence of functional variables by angle covariance}, J. Multivariate
  Anal. \textbf{182} (2021), Paper No. 104711, 15. \MR{4187269}

\bibitem[MGP21]{MR4288278}
Erfan Mehmanchi, Andr\'{e}s G\'{o}mez, and Oleg~A. Prokopyev, \emph{Solving a
  class of feature selection problems via fractional 0-1 programming}, Ann.
  Oper. Res. \textbf{303} (2021), no.~1-2, 265--295. \MR{4288278}

\bibitem[MPWZ16]{MR3564937}
Charles~A. Micchelli, Massimiliano Pontil, Qiang Wu, and Ding-Xuan Zhou,
  \emph{Error bounds for learning the kernel}, Anal. Appl. (Singap.)
  \textbf{14} (2016), no.~6, 849--868. \MR{3564937}

\bibitem[NSW11]{MR2810909}
P.~Niyogi, S.~Smale, and S.~Weinberger, \emph{A topological view of
  unsupervised learning from noisy data}, SIAM J. Comput. \textbf{40} (2011),
  no.~3, 646--663. \MR{2810909}

\bibitem[PM21]{park2021measuretheoretic}
Junhyung Park and Krikamol Muandet, \emph{A measure-theoretic approach to
  kernel conditional mean embeddings}, 2021.

\bibitem[RCOR20]{pmlr-v124-ray-chowdhury20a}
Sayak Ray~Chowdhury, Rafael Oliveira, and Fabio Ramos, \emph{Active learning of
  conditional mean embeddings via bayesian optimisation}, Proceedings of the
  36th Conference on Uncertainty in Artificial Intelligence (UAI) (Jonas Peters
  and David Sontag, eds.), Proceedings of Machine Learning Research, vol. 124,
  PMLR, 03--06 Aug 2020, pp.~1119--1128.

\bibitem[SY06]{MR2228737}
Steve Smale and Yuan Yao, \emph{Online learning algorithms}, Found. Comput.
  Math. \textbf{6} (2006), no.~2, 145--170. \MR{2228737}

\bibitem[SZ09]{MR2558684}
Steve Smale and Ding-Xuan Zhou, \emph{Geometry on probability spaces}, Constr.
  Approx. \textbf{30} (2009), no.~3, 311--323. \MR{2558684}

\bibitem[XWCT21]{MR4329775}
Ping Xu, Yue Wang, Xiang Chen, and Zhi Tian, \emph{C{OKE}:
  communication-censored decentralized kernel learning}, J. Mach. Learn. Res.
  \textbf{22} (2021), Paper No. 196, 35. \MR{4329775}

\bibitem[ZC21]{MR4318510}
Yikun Zhang and Yen-Chi Chen, \emph{Kernel smoothing, mean shift, and their
  learning theory with directional data}, J. Mach. Learn. Res. \textbf{22}
  (2021), Paper No. 154, 92. \MR{4318510}

\bibitem[ZL21]{MR4282408}
Puning Zhao and Lifeng Lai, \emph{Minimax rate optimal adaptive nearest
  neighbor classification and regression}, IEEE Trans. Inform. Theory
  \textbf{67} (2021), no.~5, 3155--3182. \MR{4282408}

\end{thebibliography}

\end{document}